
\documentclass[letterpaper, 10 pt, conference]{ieeeconf}  

\IEEEoverridecommandlockouts                              

\overrideIEEEmargins                                      




\usepackage{graphics}
\usepackage{amsmath}
\usepackage{amssymb}
\usepackage{mathtools}
\usepackage{xcolor}

\usepackage{url}
\usepackage{bbm}
\usepackage{pifont}
\usepackage{subcaption}
\usepackage{algorithm}
\usepackage{algorithmic}

\newtheorem{theorem}{Theorem}

\newtheorem{lemma}{Lemma}
\newtheorem{corollary}{Corollary}
\newtheorem{definition}{Definition}
\newtheorem{problem}{Problem}

\newtheorem{example}{Example}

\title{\LARGE \bf
Model-Free Reinforcement Learning for Stochastic Games\\ with Linear Temporal Logic Objectives
}

\author{Alper Kamil Bozkurt, Yu Wang, Michael Zavlanos, and Miroslav Pajic
\thanks{This work is sponsored in part by the ONR under agreements N00014-17-1-2504, N00014-20-1-2745 and N00014-18-1-2374, AFOSR award number FA9550-19-1-0169, and the NSF CNS-1652544 and CNS-1932011 grants.}
\thanks{Alper Kamil Bozkurt, Yu Wang, Michael Zavlanos, and Miroslav~Pajic are with Duke University, Durham, NC 27708, USA, {\tt\small \{alper.bozkurt, yu.wang094, michael.zavlanos, miroslav.pajic\}@duke.edu}}}

\begin{document}

\maketitle
\thispagestyle{empty}
\pagestyle{empty}

\begin{abstract}
We study the problem of synthesizing control strategies for Linear Temporal Logic (LTL) objectives in unknown environments. We model this problem as a turn-based zero-sum stochastic game between the controller and the environment, where the transition probabilities and the model topology are fully unknown. The winning condition for the controller in this game is the satisfaction of the given~LTL specification, which can be captured by the acceptance condition of a deterministic Rabin automaton (DRA) directly derived from the LTL specification. We introduce a model-free reinforcement learning (RL) methodology to find a strategy that maximizes the probability of satisfying a given LTL specification when the Rabin condition of the derived DRA has a single accepting pair. We then generalize this approach to LTL formulas for which the Rabin condition has a larger number of accepting pairs, providing a lower bound on the satisfaction probability.
Finally, we illustrate applicability of our RL method on two motion planning case studies.
\end{abstract}

\section{Introduction}

Reinforcement learning (RL) provides 
methods for finding solutions to sequential decision-making problems where the objective is to maximize the discounted rewards 
in unknown environments~\cite{sutton2018}. 
Recent RL developments, such as the use of deep learning and Monte Carlo tree search, have led to successful applications in a range of domains including (super) human-level board and Atari game playing~\cite{mnih2015,silver2016}. 
Yet, despite these advances, there is a need to provide RL methods with robustness and safety guarantees to allow their use in real-world systems, particularly in cyber-physical~\cite{dreossi_CompositionalFalsificationCyberPhysical_2017,sun_FormalVerificationNeural_2019,tran_SafetyVerificationCyberPhysical_2019,yaghoubi_GrayboxAdversarialTesting_2019,zarei_StatisticalVerificationLearningbased_2020} and robotic systems~\cite{cheng_EndtoEndSafeReinforcement_2019,taylor_LearningSafetyCriticalControl_2020,wang_HyperpropertiesRoboticsMotion_2020}.
To achieve this, a major challenge is to design reward signals such that a strategy optimizing the discounted reward achieves the desired task~\cite{barto2019}.

Linear temporal logic (LTL) allows formal capturing desired temporal properties of a control task or system (e.g., robot planning). Using LTL to specify the task objective can prevent unintended consequences of an optimal strategy for a shaped reward function. 
LTL specifications can be directly extracted from high-level requirements in robot planning and control~\cite{kress2008translating,guo2013revising,wongpiromsarn2012receding}. Thus,
synthesizing controllers from LTL objectives for Markov decision processes (MDPs) using RL has attracted significant attention~\cite{bozkurt2019,hasanbeig2019,hahn2019}. 
These methods generally translate the LTL specification into a limit-deterministic B\"uchi automaton (LDBA), which is then composed with the initial MDP, and design a reward function based on the acceptance condition of the automaton. Augmentation of the state space using the states of the LDBA solves the memory requirements of the task. In addition, the B\"uchi acceptance condition, which is repeated reachability, enables the use of simple reward functions.

Such LDBA-based rewarding approaches are not well-suited for stochastic games because LDBAs and many other nondeterministic automata, in general, cannot be used in solving games \cite{henzinger2006}. 
Hence, there are few studies on learning-based synthesis from temporal objectives for stochastic games. One approach 
is to translate the LTL specifications to deterministic automata with more complicated acceptance conditions than LDBAs.
For example, \cite{wen2016} proposed a probably approximately correct (PAC) learning algorithm for stochastic games with LTL and discounted sums of rewards objective. However, the approach assumes that the transition graph (i.e., topology) is known a-priori, the LTL objective must belong to a very limited subset of LTL formulas that can be translated into a deterministic B\"uchi automaton (DBA), and there exists a strategy that almost surely satisfies the LTL objective. These assumptions allow the pre-computation of the winning regions before the learning.

Recently, \cite{ashok2019} introduced a model-based learning method that yields PAC guarantees for reachability objectives. The method uses on-the-fly detection of (simple) end components of stochastic games, and careful construction of the confidence intervals on the transition probabilities. However,  only a small fragment of LTL formulas can be expressed by the reachability objectives. Also, as a model-based method, it is not efficient in terms of space requirements when the number of possible successors of actions is~not~small.

To address these limitations, in this work we introduce a model-free RL approach to synthesize controllers for stochastic games, such that the obtained control policies maximize the (worst-case) probabilities of satisfying the given LTL task objectives. To achieve this, we translate the  LTL specification into a DRA and introduce a reward and a discount (or termination) function based on the Rabin acceptance condition. We first consider DRAs with a single accepting pair and prove any model-free RL algorithm using these functions converges a desired strategy for a sufficiently large discount factor. We then generalize our method to any LTL specification, for which the DRA may have an arbitrary number of accepting pairs; for such specifications, we establish a lower bound on the satisfaction probability. Lastly, we show the applicability of our RL approach on two robot motion planning case studies. 

\section{Preliminaries and Problem Statement}\label{section:prelim}

\subsection{Stochastic (Turn-Based) Two-Player Games} \label{section:sg}

We use turn-based stochastic games to model the interaction between the controller (i.e., Player~1) and unpredictable environment (i.e., Player~2), where actions have probabilistic outcomes. The controller 
can only choose actions in certain states; the rest of the states are in control of the environment. 

\begin{definition}[Stochastic Games] \label{def:sg}
A (labeled turn-based) two-player stochastic game is a tuple $\mathcal{G}=(S,(S_\mu,S_\nu), \allowbreak A, P, s_0, \textnormal{AP}, L)$, where
$S$ is a finite set of states;
$S_\mu \subseteq S$ is the set of states where the controller chooses actions; 
$S_\nu = S \setminus S_\mu $ is the set of states at which the environment chooses actions;
$A$ is a finite set of actions whereas $A(s)$ denotes the set of actions that can be taken in state $s\in S$;
$P: S \times A \times S \to [0,1]$ is the transition probability function such that for all $s \in S$, $\sum_{s' \in S} P(s, a, s')=1$ if $a\in A(s)$, and $0$ otherwise;
\textnormal{AP} is a finite set of atomic propositions;  and 
$L\hspace{-2pt}:\hspace{-2pt}S\to 2^{\textnormal{AP}}$ is a labeling~function.
\end{definition}

A \emph{path} in a stochastic game $\mathcal{G}$ is an infinite sequence of states $\sigma=s_0s_1,\dots$ such that for all $t\geq 0$, there exists an action $a \in A(s_t)$ where $P(s_t,a,s_{t+1})>0$. We write $\sigma[t]$, $\sigma[{:}t]$ and $\sigma[t{+}1{:}]$ to denote the state $s_t$, the prefix $s_0 s_1\dots s_{t}$ and the suffix $s_{t+1}s_{t+2}\dots$ of the path,  respectively.

The behavior of the players in stochastic games is described by strategies, which maps the previously visited states to the available actions in the current state.

\begin{definition}[Strategies]
For a game $\mathcal{G}$, let $S_\mu^+$($S_\nu^+$) denote the set of all \textbf{finite} prefixes $\sigma_{\text{f}}^{s_\mu}$($\sigma_{\text{f}}^{s_\nu}$) ending with a state $s_\mu \in S_\mu$($s_\nu \in S_\nu$, respectively) of paths in the game. Then, 
\begin{itemize}

\item a (pure) \textbf{control strategy} $\mu$ 
is a function $\mu:S_\mu^+\to A$ such that $\mu(\sigma_{\text{f}}^{s_\mu})\in A(s_\mu)$ for all $\sigma_{\text{f}}^{s_\mu} \in S_\mu^+$,

\item a (pure) \textbf{environment strategy} $\nu$ 
is a function $\nu:S_\nu^+\to A$ such that $\nu(\sigma_{\text{f}}^{s_\nu})\in A(s_\nu)$ for all $\sigma_{\text{f}}^{s_\nu} \in S_\nu^+$,

\item a strategy $\pi$ is \textbf{memoryless}, if it only depends on the current state, i.e., $\pi(\sigma_f^s) = 
\pi(\sigma_f^{s'})$ if $s=s'$ for any $\sigma_f^s$ and $\sigma_f^{s'}$, and thus can be defined as $\pi: S \to A$.
\end{itemize}
\end{definition}

The induced Markov chain (MC) of a game $\mathcal{G}$ under a strategy pair $(\mu,\nu)$ is tuple $\mathcal{G}_{\mu,\nu}=(S,P_{\mu,\nu},s_0,\textnormal{AP},L)$,
where 
\begin{align*}
    P_{\mu,\nu}(s, s') = \begin{cases}
    P(s, \mu(s), s') & \textrm{if } s \in S_\mu \\
    P(s, \nu(s), s') & \textrm{if } s \in S_\nu \\
    \end{cases}.
\end{align*}
We denote by $\mathcal{G}_{\mu,\nu}^{s}$ the MC resulting from changing the initial state from $s_0$ to $s\in S$ in $\mathcal{G}_{\mu,\nu}$, and use $\sigma \sim \mathcal{G}_{\mu,\nu}^{s}$ to denote a random path sampled from $\mathcal{G}_{\mu,\nu}^{s}$. Finally, a \emph{\textbf{bottom strongly connected component}} (BSCC) of the (induced) MC $\mathcal{G}_{\mu,\nu}$ is a strongly connected component with no outgoing transitions; we use $\mathcal{B}(\mathcal{G}_{\mu,\nu})$ to denote the set of all BSCCs of $\mathcal{G}_{\mu,\nu}$.

\subsection{LTL and Deterministic Rabin Automata} \label{section:ltl_prelim}

We capture the desired behaviors of a labeled stochastic game by LTL specifications, which impose requirements on the label sequences corresponding to the infinite paths of the game. LTL offers a formal language that can be used to specify desired temporal characteristics or tasks of a controller~\cite{baier2008}. In addition to the standard Boolean operators: negation ($\neg$) and conjunction ($\wedge$), LTL formulas can include two temporal operators, namely next ($\bigcirc$) and until ($\textsf{U}$), and any recursive combinations of the operators. The formal syntax of LTL is defined as~\cite{baier2008}
\begin{align}
    \hspace{-0.5em} \varphi \coloneqq  \mathrm{true} \mid a \mid \varphi_1 \wedge \varphi_2 \mid \neg \varphi \mid \bigcirc \varphi \mid \varphi_1 \textsf{U} \varphi_2, ~ {a\in\textnormal{AP}}, \label{eq:ltl}
\end{align}
where AP is a set of atomic propositions.

For a stochastic game $\mathcal{G}$ with a labeling function $L$, 
the LTL semantics is defined over the paths of the game. A path $\sigma$ satisfies an LTL formula $\varphi$, denoted by $\sigma \models \varphi$ if:
\begin{itemize}
    \item $\varphi=a$ and $a \in L(\sigma[0])$,
    \item $\varphi=\varphi_1 \wedge \varphi_2$, $\sigma \models \varphi_1$ and $\sigma \models \varphi_1$,
    \item $\varphi=\neg \varphi'$ and $\sigma \not\models \varphi'$,
    \item $\varphi=\bigcirc \varphi'$ and $\sigma[1{:}] \models \varphi'$,
    \item $\sigma \models \varphi_1 \textsf{U} \varphi_2$, $\exists i. \sigma[i] \models \varphi_2$ and $ \forall j<i. \sigma[j] \models \varphi_1$.
\end{itemize}
Other operators can be easily derived: $\varphi_1 \lor \varphi_2 \coloneqq \neg (\neg \varphi_1 \land \neg \varphi_2)$, $\varphi_1 \to \varphi_2 \coloneqq \neg \varphi_1 \lor \varphi_2$,
(eventually) $\lozenge \varphi \coloneqq \mathrm{true}\ \textsf{U}\ \varphi$; and (always) $\square \varphi \coloneqq \neg (\lozenge \neg \varphi)$~\cite{baier2008}.

Any LTL formula can be systematically transformed into a DRA that accepts the language of all paths satisfying the formula~\cite{baier2008}. DRAs are similar to deterministic finite automata except for the acceptance criteria, which is defined based on infinite visits of some states.

\begin{definition}[Deterministic Rabin Automata] \label{def:dra}
A DRA is a tuple $\mathcal{A}=(Q,\Sigma, \delta, q_0, \textnormal{Acc})$ where
$Q$ is a finite set of states;
$\Sigma$ is a finite alphabet;
$\delta: Q \times \Sigma \to Q$ is the transition function;
$q_0 \in Q$ is an initial state; and
$\textnormal{Acc}$ is a set of $k$ accepting pairs $\{(C_i, B_i)\}_{i=1}^k$ such that $C_i, B_i \subseteq Q$. 
\end{definition}

An infinite path $\sigma$ is accepted by the DRA if it satisfies the \emph{\textbf{Rabin condition}} -- i.e., there exists a pair $(C_i,B_i) \in \textnormal{Acc}$  such that the states in $C_i$ are visited finitely many times and at least one state in $B_i$ visited is infinitely often, namely,
\begin{align}
    \exists i: \ inf(\sigma)\cap C_i =\varnothing \ \wedge \ inf(\sigma) \cap B_i \neq \varnothing, \label{eq:dra_acc}
\end{align}
where $inf(\sigma)$ denotes the set of states visited by the path $\sigma$ for infinitely many times.
The \emph{\textbf{Rabin index}} of an LTL formula is the minimum number of accepting pairs a DRA recognizing the formula can have. Without loss of generality, we assume that the number of accepting pairs, $k$, is equal to the Rabin index of the accepted language.

\begin{example} 
Figure~\ref{fig:dra} illustrates a DRA derived from the LTL formula $\varphi=\square \lozenge b \vee \lozenge \square d$, with Rabin acceptance sets $B_1{=}\{q_1\}$, $C_2{=}\{q_0\}$ and $B_2{=}\{q_2\}$ (i.e., acceptance condition $\textnormal{Acc}{=}\{(\varnothing,B_1),(C_2,B_2)\}$). Note that consuming a label having $b$ in it, leads to a transition to the state $q_1$ from any state. Thus, any path $\sigma$ containing infinitely many states labeled with $b$ induces an execution that visits $q_1$ infinitely many times; thereby satisfying the Rabin condition. Other executions satisfying the Rabin conditions are the ones visiting $q_2$ infinitely many times but $q_0$ only finitely many times. Those can be only produced by the paths that after some~point, do not contain a state whose label is not $d$.
\end{example}

\begin{figure}[!t]
    \centering
    \includegraphics[width=0.4\textwidth]{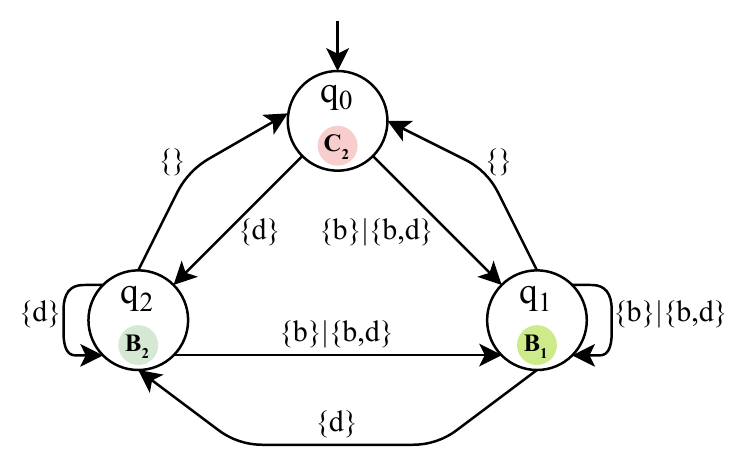}
    \caption{A DRA derived from the LTL formula $\varphi{=}\square \lozenge b {\vee} \lozenge \square d$. Here, 
    $B_1{=}\{q_1\}$, $C_2{=}\{q_0\}$ and $B_2{=}\{q_2\}$ are the states in the Rabin acceptance condition $\textnormal{Acc}{=}\{(\varnothing,B_1),(C_2,B_2)\}$.}
    \label{fig:dra}
\end{figure}

\subsection{Reinforcement Learning for Stochastic Games} \label{section:rl}

Let $R: S\to \mathbb{R}$ be a \emph{reward function} and $\gamma \in (0, 1)$ be the discount factor for a given two-player zero-sum stochastic game $\mathcal{G}$. 
The \emph{value} of a state $s$ under the strategy pair $(\mu,\nu)$ is the expected sum of the discounted reward
\begin{align}
    v_{\mu,\nu}(s) = \mathbb{E}_{\sigma \sim \mathcal{G}_{\mu,\nu}}\left[ \sum_{i=0}^{\infty} \gamma^{i}R(\sigma[t{+}i])
    \ \bigg\vert \ \sigma[t]=s \right],
\end{align}
for any fixed $t \in \mathbb{N}$, such that $Pr_{\sigma \sim \mathcal{G}_{\mu,\nu}}\left[\sigma[t]{=}s \right] > 0$. We omit the subscript $_{\sigma \sim \mathcal{G}_{\mu,\nu}}$ from the expectation and write $\mathbb{E}$ rather than $\mathbb{E}_{\sigma \sim \mathcal{G}_{\mu,\nu}}$.

The RL objective is to find an optimal \emph{control strategy} $\mu_*$ to maximize the values of every state 
under the worst \emph{environment strategy}. A pure and memoryless optimal strategy always exists in two-player turn-based zero-sum stochastic games \cite{shapley1953stochastic,littman1994}. The optimal values in these games satisfy~\cite{hu2003nash} 
\begin{align}
v_*(s) = \max_\mu \min_\nu v_{\mu,\nu}(s),
\end{align}
where $\mu$ and $\nu$ are pure and memoryless control and environment strategies, respectively. 
In addition, the \emph{optimal values} $v_*(s)$ need to satisfy the Bellman equations
\begin{align*}
    v_*(s) = R(s) {+} \gamma\begin{cases}
        \max\limits_{a \in A(s)} \sum\limits_{s'\in S}P(s,a,s')v_*(s') & \textrm{if } s \in S_\mu, \\
        \min\limits_{a \in A(s)} \sum\limits_{s'\in S}P(s,a,s')v_*(s') & \textrm{if } s \in S_\nu.
    \end{cases}
\end{align*}

Model-free RL methods aim to learn the optimal values of the stochastic game, in which neither the transition probabilities nor the game topology are known, without explicitly constructing a transition model of the game. A popular example is the minimax-Q method that generalizes the standard off-policy Q-learning algorithm to stochastic games. The minimax-Q method can learn the optimal values from any (likely non-optimal) strategies used during learning as long as all the actions in each state are chosen infinitely often~\cite{littman1994,bowling2000}.

\subsection{Problem Formulation} \label{section:problem}

We assume that the considered game $\mathcal{G}$ is fully observable for both players; i.e., 
both the controller and  environment are aware of the current game state.
A control synthesis problem can be roughly described as finding a strategy for the controller in a stochastic game such that all the paths produced under the strategy satisfy the given LTL specification regardless of the behavior of the environment. If such a strategy does not exist, the objective becomes to find a strategy that maximizes the probability that a produced path satisfies the specification in the worst~case.

To simplify our notation, we use $Pr_{\mu,\nu}^{\mathcal{G}}(s \models \varphi)$ to denote the probability of the paths starting from the state $s$ that satisfy the formula $\varphi$ under~the~strategy pair $({\mu,\nu})$ -- i.e., 
\begin{align}
    Pr_{\mu,\nu}^\mathcal{G}(s \models \varphi) \coloneqq Pr_{\sigma \sim \mathcal{G}_{\mu,\nu}^s}( \sigma \models \varphi );
\end{align}
we write $Pr_{\mu,\nu}(\mathcal{G} \models \varphi)$ for $Pr_{\mu,\nu}^{\mathcal{G}}(s_0 \models \varphi)$ and use $Pr_*$ to denote the maximin probability $\max_{\mu} \min_{\nu} Pr_{\mu,\nu}$. We can now formally define the considered problem as follows.

\begin{problem} \label{problem}
Given a labeled turn-based stochastic game~$\mathcal{G}$, where the transition probabilities are fully unknown, and an LTL specification $\varphi$, design a model-free RL algorithm that finds a \emph{pure finite-memory} controller strategy $\mu_*$ such~that
\begin{align}
    Pr_{\mu_*,\nu}\left(\mathcal{G} \models \varphi \right) \geq 
    Pr_*\left(\mathcal{G} \models \varphi \right) \label{eq:problem_statement}
\end{align}
for any environment strategy $\nu$.
\end{problem}


\section{Learning for Stochastic Rabin Games} \label{section:method}

In this section, we describe our model-free RL approach to derive control strategies that maximize the (worst-case) probability of satisfying a given LTL formula. First, we describe the 
product game construction, a key step in reducing the problem of satisfying an LTL specification into the problem of satisfying a Rabin condition. We then consider the case where the DRA derived from the LTL objective~$\varphi$ has a single Rabin pair, 
and  introduce our 
rewarding and discounting mechanisms 
based on it.
We show that maximization of the discounted reward maximizes the minimal probability of satisfying the single pair Rabin condition, and thus the initial LTL objective. Finally, we provide a generalization to Rabin conditions with an arbitrary ($k{>}1$) number of accepting pairs; thereby allowing the use of or method for all possible LTL specifications. Specifically, we  construct a game from $k$ copies of the original game, where only a single Rabin pair needs to be satisfied;
an optimal solution to this game guarantees a lower bound on the satisfaction probabilities.

\subsection{Product Game Construction} \label{section:product}
Our main idea is that by forming an augmented state~space, Problem~\ref{problem} can be reduced into finding a memoryless control strategy. 
Specifically, we compose the states of the game $\mathcal{G}$ with the states of the DRA $\mathcal{A}$ derived from the LTL specification $\varphi$. Then, the goal in this space is to satisfy the Rabin acceptance condition, for which memoryless control strategies suffice~\cite{chatterjee2012}. 

\begin{definition}[Product Game] \label{def:product} A~product~game $\mathcal{G}^\times = (S^\times, (S_\mu^\times,S_\nu^\times), A^\times, P^\times, s_0^\times, \textnormal{Acc}^\times)$ of a labeled turn-based stochastic game $\mathcal{G}{=}(S, (S_\mu,S_\nu), A, P, s_0, \textnormal{AP}, L)$ and a DRA $\mathcal{A}=(Q,2^\textnormal{AP}, \delta, q_0, B)$  is defined as follows: 
\begin{itemize}
    \item $S^\times=S\times Q$ is the set of augmented states, where the initial state $s_0^\times$ is $\langle s_0,q_0 \rangle$,
    \item $S_\mu^\times=S_\mu\times Q$ is the set of augmented controller states,
    \item $S_\nu^\times=S_\nu\times Q$ is the set of augmented environment states,
    \item $A^\times=A$
is the set of actions,
    \item $P^\times:S^\times \times A^\times \times S^\times \to [0,1]$ is the transition function:
    \begin{align}
        &P^\times(\langle s,q \rangle,a,\langle s',q'\rangle) {=} 
        \begin{cases}
            P(s,a,s') & \textrm{if } q' {=} \delta(q,L(s)) \\
            0 & \textnormal{otherwise}
        \end{cases} \notag 
    \end{align}

    \item $\textnormal{Acc}^\times$ is a set of $k$ accepting pairs $\{(C_i^\times, B_i^\times)\}_{i=1}^k$ where $C_i^\times = C_i \times Q$ and $B_i^\times = B_i \times Q$.
\end{itemize}
\end{definition}

Similarly to (\ref{eq:dra_acc}), a path $\sigma^\times$ of the product game $\mathcal{G}^\times$ satisfies the Rabin condition if there exists $i$, such that
$
    \ inf(\sigma^\times)\cap C_i^\times =\varnothing \ \wedge \ inf(\sigma^\times) \cap B_i^\times \neq \varnothing. \label{eq:prod_acc}
$
Finally, we refer to a product game with $k$ accepting pairs as a Rabin($k$)~game.

There is a one-to-one correspondence between the paths in the product game and the original game. Similarly, a strategy for the product game induces a strategy in the original game and vice versa. Note that, however, the corresponding strategies in the original game require additional memory described by the DRA -- i.e., the strategy in the original game may not be memoryless. On the other hand, the probability of satisfying the Rabin condition under any strategy pair in the product game is equivalent to the probability of satisfying the LTL formula in the original game under the corresponding strategy pair.
Hence, in the rest of the section we focus on the product games, i.e., stochastic Rabin games; to simplify our notation, we omit the superscript $ ^\times$ and use $\mathcal{G}=(S, (S_\mu, \allowbreak S_\nu), A, P, s_0, \textnormal{Acc})$ and $s\in S$ instead of $\mathcal{G}^\times=(S^\times, \allowbreak (S_\mu^\times,S_\nu^\times) A^\times,P^\times s_0^\times, \textnormal{Acc}^\times)$ and $\langle s,q \rangle \in S^\times$.

\subsection{Rabin($1$) Condition to Discounted Rewards} 
\label{section:rabin_1}
We first consider the case where the LTL formula $\varphi$ has one accepting pair in the Rabin acceptance condition.
In stochastic Rabin($1$) games, where $\textnormal{Acc} = \{(C,B)\}$, the objective of the controller is to repeatedly visit some states in $B$ and visit the states in $C$ for only finitely many times. 
On the other hand, the environment's goal is to prevent this from happening, which can be also expressed as a Rabin condition with two accepting pairs $\textnormal{Acc}' = \{(\varnothing,C),(B,S)\}$. 
Thus, pure and memoryless strategies  suffice for both players on the considered product game~\cite{chatterjee2012}.

To find a control strategy that satisfies~\eqref{eq:problem_statement} for stochastic Rabin($1$) games using RL, our key idea is to assign small rewards to the states in $B$ to encourage visiting $B$ states as often as possible; but discount more compared to the other states to eliminate the importance of the frequency of visits. In addition, we discount even more in the states in $C$ without giving any rewards, which diminishes the worth of the rewards to be obtained by visiting the states in $B$.
The following proposition summarizes our 
key results.

\begin{theorem} \label{theorem:rabin}
Consider a given turn-based stochastic Rabin($1$) product game $\mathcal{G}$ and the return of any path $\sigma$ defined~as
\begin{align}
    \vspace{-1em} G_t(\sigma) &\coloneqq \sum\nolimits_{i=0}^{\infty} R_B(\sigma[t{+}i])\cdot \prod\nolimits_{j=0}^{i-1} \Gamma_{B,C}(\sigma[t{+}j]), \label{eq:return_updated}
\end{align} 
where $\prod_{j=0}^{-1} \coloneqq 1$, $R_B:S\to[0,1)$ and $\Gamma_{B,C}: S \to (0,1)$ are the reward and the terminal functions defined as
\begin{align}
    R_B(s) &\coloneqq \begin{cases}
        1-\gamma_B & \textrm{if }s \in B \\
        0 & \textrm{if }s \notin B
    \end{cases}, \\
        \Gamma_{B,C}(s) &\coloneqq \begin{cases}
        \gamma_B & \textrm{if }s \in B \\
        \gamma_C & \textrm{if }s \in C \\
        \gamma & \textrm{if }s \in S \setminus (B \cup C)
    \end{cases}.  \label{eq:reward_discount}
\end{align}
Here, $\gamma_B$ and $\gamma_C$ are functions of $\gamma$ such that $0<\gamma_C(\gamma)<\gamma_B(\gamma)<\gamma<1$, and $\lim\limits_{\gamma \to 1^-} \gamma_B = \lim\limits_{\gamma \to 1^-} \gamma_C = 1 $, as well as
\begin{align} \label{eq:gamma_lim}
    \lim_{\gamma \to 1^-} \frac{1 - \gamma}{1 - \gamma_B (\gamma)} = \lim_{\gamma \to 1^-} \frac{1 - \gamma_B(\gamma)}{1 - \gamma_C (\gamma)} = 0. 
\end{align}
Then, the value of the game $v_{\mu,\nu}^\gamma$ (i.e., the expected return $\mathbb{E}\left[G_t(\sigma)\right]$) for the strategy pair~$(\mu,\nu)$ and the discount factor $\gamma$ satisfies that for all states $s \in S$ it holds that
\begin{align} \label{eq:thm2}
    \lim_{\gamma \to 1^{{-}}} v_{\mu,\nu}^\gamma(s) = Pr^{\mathcal{G}}_{\mu,\nu}(s \models \varphi_{B,C});
\end{align} 
here, $\varphi_{B,C} \coloneqq \square \lozenge B \wedge \neg \square \lozenge C$ is the Rabin condition of the DRA derived from the LTL objective $\varphi$.
\end{theorem}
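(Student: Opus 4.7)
The plan is to reduce Theorem~\ref{theorem:rabin} to a decomposition of the induced Markov chain $\mathcal{G}_{\mu,\nu}$ into its BSCCs. Fix $s \in S$ and $t = 0$; with probability one the path $\sigma$ enters some BSCC $T \in \mathcal{B}(\mathcal{G}_{\mu,\nu})$ in finite time and thereafter visits every state of $T$ infinitely often. Without loss of generality assume $B \cap C = \varnothing$, since states in $B \cap C$ would violate the Rabin condition and can be re-assigned to $C$. Classify each BSCC as \emph{accepting} ($T \cap B \neq \varnothing$ and $T \cap C = \varnothing$), \emph{$C$-rejecting} ($T \cap C \neq \varnothing$), or \emph{barren} ($T \cap (B \cup C) = \varnothing$). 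Let $p_T(s) = \Pr_{\mu,\nu}(\sigma \text{ enters } T \mid \sigma[0]=s)$. Since $\sigma \models \varphi_{B,C}$ holds precisely when $\sigma$ enters an accepting BSCC, the theorem reduces to showing $\lim_{\gamma \to 1^-} v^\gamma_{\mu,\nu}(s) = \sum_{T\,\text{accepting}} p_T(s)$.

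The key device is a probabilistic reading of the return. Conditional on $\sigma$, let $\tau$ be the stopping time of independent Bernoulli trials with ``stop'' probability $1 - \Gamma_{B,C}(\sigma[i])$ at step $i$. Because $R_B(s) = (1 - \Gamma_{B,C}(s))\,\mathbb{1}[s \in B]$ holds at every state, a direct regrouping of~\eqref{eq:return_updated} gives
\begin{equation*}
    G_0(\sigma) = \Pr(\sigma[\tau] \in B \mid \sigma), \qquad \mathbb{E}[G_0] = \Pr(\sigma[\tau] \in B),
\end{equation*}
so $v^\gamma_{\mu,\nu}(s)$ is exactly the probability that the thinning mechanism stops at a $B$-state, and $0 \leq G_0 \leq 1$ pointwise (which will justify the final dominated-convergence step). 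The per-visit stop probabilities at $B$-, $C$-, and other states are $1-\gamma_B$, $1-\gamma_C$, and $1-\gamma$ respectively, and hypothesis~\eqref{eq:gamma_lim} asserts that $(1-\gamma) \ll (1-\gamma_B) \ll (1-\gamma_C)$ as $\gamma \to 1^-$.

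I would then analyze the value inside each BSCC via the Bellman equation and a renewal argument at successive visits to $B$. For an accepting BSCC $T$ the complementary value $\hat V = 1 - V$ obeys $\hat V(s) = (1-\gamma)\mathbb{1}[s \notin B] + \Gamma_{B,C}(s)\sum_{s'} P_{\mu,\nu}(s,s')\hat V(s')$; expanding between two consecutive $B$-visits and bounding $1 - \mathbb{E}[\gamma^{T^*}] \leq (M_T - 1)(1-\gamma)$ via Jensen's inequality (with $M_T<\infty$ the worst-case expected return time to $B$ in $T$) yields $\max_{s \in T} \hat V(s) = O\bigl((1-\gamma)/(1-\gamma_B)\bigr) \to 0$. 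For a $C$-rejecting BSCC the symmetric expansion at successive $B$-visits brings the stationary frequency $\pi_T(C) > 0$ of $C$-states into play: each $C$-visit between two $B$-visits multiplies the prefix discount by $\gamma_C$, giving $\max_{s \in T} V(s) = O\bigl((1-\gamma_B)/(1-\gamma_C)\bigr) \to 0$. Barren BSCCs contribute $V \equiv 0$ trivially. The transient prefix has $\mathbb{E}[H] < \infty$, so its reward contribution is $O\bigl((1-\gamma_B)\mathbb{E}[H]\bigr) \to 0$ and the accrued pre-BSCC discount tends to $1$. Summing over BSCC classes and interchanging limit with expectation yields $\lim_{\gamma \to 1^-} v^\gamma_{\mu,\nu}(s) = \sum_{T\,\text{accepting}} p_T(s) = \Pr^{\mathcal{G}}_{\mu,\nu}(s \models \varphi_{B,C})$.

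The main obstacle is the uniform quantitative control of $V$ (or $\hat V$) within each BSCC when $\gamma$, $\gamma_B$, $\gamma_C$ all tend to $1$ at the three separated rates of~\eqref{eq:gamma_lim}: the renewal bounds must be tight enough to isolate the dominant discount scale and force the subdominant error term to vanish. Hypothesis~\eqref{eq:gamma_lim} is used in exactly two places---once in the accepting case through the first ratio, and once in the $C$-rejecting case through the second---while the rest of the proof is classical ergodic Markov-chain theory combined with bookkeeping across BSCC classes.
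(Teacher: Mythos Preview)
Your proposal is correct and follows the same architecture as the paper's own proof: decompose $\mathcal{G}_{\mu,\nu}$ into BSCCs, classify them as accepting, $C$-containing, or barren, and show via a renewal (return-time) argument together with Jensen's inequality that state values tend to $1$ on accepting BSCCs with error $O\bigl((1-\gamma)/(1-\gamma_B)\bigr)$ and to $0$ on $C$-containing BSCCs with error $O\bigl((1-\gamma_B)/(1-\gamma_C)\bigr)$, invoking the two limits in~\eqref{eq:gamma_lim} at precisely the same two points; the transient prefix is then absorbed via a finite expected hitting time. The paper packages these steps as Lemma~\ref{lemma:inequalities} (pathwise bounds on $G_t$), Lemma~\ref{lemma:U} (limits of $v^\gamma$ on the sets $U_{\overline{BC}}$, $U_B$, $U_C$), and the theorem proof (conditioning on $\lozenge U_B$), but the underlying estimates are the same as yours. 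The one genuine difference is your thinning interpretation $G_0(\sigma)=\Pr(\sigma[\tau]\in B\mid\sigma)$: it replaces Lemma~\ref{lemma:inequalities} by delivering $0\le G_0\le 1$ for free and makes the three-scale competition among stop-in-$B$, stop-in-$C$, and stop-elsewhere immediately visible; the paper instead works directly with the recursion $G_t=R_B(\sigma[t])+\Gamma_{B,C}(\sigma[t])\,G_{t+1}$, which is marginally more convenient for the explicit return-time expansions. Either presentation yields the same quantitative bounds, so the distinction is stylistic rather than structural.
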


Before proving Theorem~\ref{theorem:rabin}, we show 
Lemma~\ref{lemma:inequalities}, later used to establish bounds on the states' values.
Intuitively, Lemma~\ref{lemma:inequalities} shows that if we replace a state on a path with a state in $B$, we obtain a larger or equal return; if we replace it with a state in $S\setminus B$, we obtain a smaller or equal return; and the return is always between 0 and~1.

\begin{lemma} \label{lemma:inequalities}
For any path $\sigma$ and a fixed $t\geq0$, in a stochastic game with the path return defined as in~\eqref{eq:return_updated}, it holds that

\vspace{-10pt}
\small
\begin{align}
\label{eq:l1_in1}
    \hspace{-0.5em} \gamma_C G_{t+1}(\sigma) \leq \gamma G_{t+1}(\sigma) &\leq G_t(\sigma) \leq 1{-}\gamma_B {+} \gamma_B G_{t+1}(\sigma),  \\
    \hspace{-0.5em}0 &\leq G_t(\sigma) \leq 1. \label{eq:l1_in2}
\end{align}
\end{lemma}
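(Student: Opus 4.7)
The plan is to exploit the one-step recursive structure of the return. Peeling off the $i=0$ term in the defining series and reindexing the remaining tail gives
\begin{equation*}
G_t(\sigma) = R_B(\sigma[t]) + \Gamma_{B,C}(\sigma[t])\, G_{t+1}(\sigma),
\end{equation*}
so, depending on whether $\sigma[t]\in B$, $\sigma[t]\in C$, or $\sigma[t]\in S\setminus(B\cup C)$, one of the three closed forms $G_t=(1-\gamma_B)+\gamma_B G_{t+1}$, $G_t=\gamma_C\, G_{t+1}$, or $G_t=\gamma\, G_{t+1}$ holds. Every subsequent claim will be checked by substituting these three identities.

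I would prove (\ref{eq:l1_in2}) first, because the uniform bound $G_{t+1}\in[0,1]$ is what drives the main sandwich. Non-negativity is immediate since $R_B\ge 0$ and $\Gamma_{B,C}>0$, so every term of the defining series is non-negative. For the upper bound I would argue by induction on the truncation length $N$ that the partial sum $G_t^N\le 1$, uniformly in the starting index $t$ and the path $\sigma$: the one-step recursion combined with the inductive hypothesis yields $(1-\gamma_B)+\gamma_B\cdot 1=1$ in the $B$ case, $\gamma_C\cdot 1\le 1$ in the $C$ case, and $\gamma\cdot 1\le 1$ in the remaining case. Passing to the limit $N\to\infty$ then gives $G_t\le 1$ for every $\sigma$ and $t$.

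With $G_{t+1}\in[0,1]$ in hand, the chain (\ref{eq:l1_in1}) reduces to elementary arithmetic built on the strict ordering $\gamma_C<\gamma_B<\gamma<1$. The innermost $\gamma_C G_{t+1}\le\gamma G_{t+1}$ is trivial from $\gamma_C\le\gamma$ and $G_{t+1}\ge 0$. For the upper bound $G_t\le 1-\gamma_B+\gamma_B G_{t+1}$, substituting the three closed forms gives: equality in the $B$ case; $(\gamma_C-\gamma_B)G_{t+1}\le 1-\gamma_B$ in the $C$ case, whose left-hand side is non-positive because $\gamma_C<\gamma_B$; and $(\gamma-\gamma_B)G_{t+1}\le 1-\gamma_B$ in the remaining case, which follows from $G_{t+1}\le 1$ together with $\gamma-\gamma_B<1-\gamma_B$. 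The lower bound $\gamma G_{t+1}\le G_t$ is handled analogously by substitution, using the same ordering to make the residuals have the right sign.

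The only step requiring real thought is the uniform $[0,1]$ bound, since the series is infinite and direct induction on $t$ is unavailable; the truncation-then-limit argument is what circumvents this. Once that bound is in place, everything else is a transparent three-way case analysis driven purely by the ordering of $\gamma_C$, $\gamma_B$, $\gamma$, and $1$.
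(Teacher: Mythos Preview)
Your approach mirrors the paper's almost exactly: both derive the one-step recursion
\[
G_t(\sigma) = R_B(\sigma[t]) + \Gamma_{B,C}(\sigma[t])\, G_{t+1}(\sigma),
\]
establish $0 \le G_t \le 1$, and then verify the chain~\eqref{eq:l1_in1} by substituting the three closed forms and using the ordering $\gamma_C < \gamma_B < \gamma < 1$. The only substantive difference is in how $G_t \le 1$ is obtained: the paper replaces $\gamma$ and $\gamma_C$ by $1$ in the discount to form a dominating return $G'_t$ that telescopes to $1-\gamma_B^{b+1}\le 1$, whereas you induct on the truncation length and pass to the limit. Both are sound; yours is slightly more elementary and avoids counting visits to~$B$.

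There is, however, a genuine gap in your treatment of the middle inequality $\gamma\, G_{t+1}(\sigma) \le G_t(\sigma)$. You say it ``is handled analogously by substitution, using the same ordering to make the residuals have the right sign,'' but when $\sigma[t] \in C$ the recursion gives $G_t = \gamma_C\, G_{t+1}$, and the required inequality becomes $(\gamma - \gamma_C)\, G_{t+1} \le 0$. Since $\gamma > \gamma_C$ and $G_{t+1}\ge 0$, this residual has the \emph{wrong} sign, and the inequality fails whenever $G_{t+1}>0$ (e.g.\ a path visiting a $C$ state once and then $B$ states forever). The paper's own proof skips this same case, and in fact the middle link of~\eqref{eq:l1_in1} as stated is too strong for arbitrary paths; only $\gamma_C\, G_{t+1}\le G_t$ holds unconditionally. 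Where the stronger bound $G_t\ge \gamma\,G_{t+1}$ is actually invoked downstream (Case~II of Lemma~\ref{lemma:U}), the path is confined to a BSCC disjoint from $C$, so the $C$ branch of the recursion never arises---that is the tacit hypothesis under which your ``analogous'' case check would go through.
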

\normalsize

\begin{proof}
It holds that $0 \leq \gamma_C G_{t+1}(\sigma) \leq \gamma G_{t+1}(\sigma)$  since the rewards are nonnegative and $\gamma > \gamma_C$.
Now, let us assume that we do not discount in the states that do not belong to $B$, i.e., we replace $\gamma$ and $\gamma_C$ with $1$ in $G_t(\sigma)$ as
\[
\Gamma_B'(s) = \begin{cases} \gamma_B & \textrm{if }s \in B \\ 1 & \textrm{if }s \notin B \end{cases}.
\]
Then the return
\[
G'_t(\sigma) = \sum_{i=0}^{\infty} R_B(\sigma[t{+}i]) \prod_{j=0}^{i-1} \Gamma_B' (\sigma[t{+}j])
\]
is evidently larger than $G_t(\sigma)$. Furthermore, it holds that $G_t(\sigma) \leq G'_t(\sigma) \leq 1 - \gamma_B^(b+1) \leq 1$,
where $b$ is the number of times a $B$ state is visited; thus, the return of any path is bounded by 1 from above.

From the path return definition~\eqref{eq:return_updated},  it holds that
\begin{align}
    G_t(\sigma) = \begin{cases}
        1{-}\gamma_B + \gamma_B G_{t+1}(\sigma) & \textrm{if }\sigma[t] \in B \\
        \gamma_C G_{t+1}(\sigma) & \textrm{if }\sigma[t] \in C \\
        \gamma G_{t+1}(\sigma) & \text{otherwise}
    \end{cases}. \label{eq:return_recursive}
\end{align}
Using $\gamma_B < \gamma$ and $G_{t+1}(\sigma)\leq 1$, we obtain that
\begin{align}
    1{-}\gamma_B + \gamma_B G_{t+1}(\sigma) &\geq 1{-}\gamma + \gamma G_{t+1}(\sigma) \notag\\
    &\geq \gamma G_{t+1}(\sigma),
\end{align}
which along with \eqref{eq:return_recursive} concludes the proof of~\eqref{eq:l1_in1},~\eqref{eq:l1_in2}.
\end{proof}

Under a strategy pair $(\mu,\nu)$, it is straightforward to check the probability that a Rabin condition is satisfied in a game $\mathcal{G}$ (i.e., MC $\mathcal{G}_{\mu,\nu}$). All paths in the induced MC $\mathcal{G}_{\mu,\nu}$ eventually reach a BSCC 
$T \in \mathcal{B}(\mathcal{G}_{\mu,\nu})$ and visit its states infinitely many times. A path reaching a state in a BSCC  that does not contain any state in $B$ or $C$, does not satisfy the Rabin condition. We denote the set of all such states by $U_{\overline{BC}}$. Similarly, if a path reaches a state in a BSCC without any state in $C$ but with a state in $B$, it satisfies the Rabin condition; finally, if it reaches a state in a BSCC that does contain a state from  $C$, it does not satisfy the Rabin condition. We write $U_B$ and $U_C$ to denote the set of these states, respectively (formally defined in Lemma~\ref{lemma:U}). This reasoning reduces finding the probability of satisfying the Rabin condition to finding the probability of reaching a state in $U_B$, which allows us to focus on the reachability objective $\varphi_{U_B}\coloneqq\lozenge U_B$ instead of $\varphi_{B,C}$ defined in Theorem~\ref{theorem:rabin}.

We now show that the expected values of the returns~\eqref{eq:return_updated} (i.e., the state values) reflect the Rabin acceptance~condition. 

\begin{lemma} \label{lemma:U}
For any stochastic Rabin game $\mathcal{G}$ with $\textnormal{Acc}=\{(C,B)\}$ under a strategy pair $(\mu,\nu)$, it holds that:
\begin{align}\label{eq:L2i}
\lim_{\gamma \to 1^{-}} v_{\mu,\nu}^\gamma(s) &= 0 \text{ if }s \in U_{\overline{BC}},\\
\label{eq:L2ii}
\lim_{\gamma \to 1^{-}} v_{\mu,\nu}^\gamma(s) &= 1 \text{ if }s \in U_B,\\
\label{eq:L2iii}\lim_{\gamma \to 1^{-}} v_{\mu,\nu}^\gamma(s) &= 0 \text{ if }s \in U_C,
\end{align}
%
where the sets $U_{\overline{BC}}$, $U_B$ and $U_c$ are defined as:
\begin{equation}
\begin{split}
\label{eq:Us}
U_{\overline{BC}} &\coloneqq \{s_{\overline{BC}} \mid s_{\overline{BC}} {\in} T, T {\in} \mathcal{B}(\mathcal{G}_{\mu,\nu}), T{\cap}B {=} \varnothing,  T{\cap}C {=} \varnothing\}, \\
U_B &\coloneqq \{s_B \mid \exists T {\in} \mathcal{B}(\mathcal{G}_{\mu,\nu}), s_B {\in} T{\cap}B, T{\cap}C {=} \varnothing\},\\
U_C &\coloneqq \{s_C \mid \exists T {\in} \mathcal{B}(\mathcal{G}_{\mu,\nu}), s_C {\in} T{\cap}C\}. 
\end{split}
\end{equation}
\end{lemma}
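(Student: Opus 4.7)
The plan is to treat the three claims separately, exploiting the fact that for every $s$ appearing in one of $U_{\overline{BC}}$, $U_B$, $U_C$, the state already lies in a BSCC $T$ of the induced MC $\mathcal{G}_{\mu,\nu}$, so the path never leaves $T$ and the restriction of the chain to $T$ is finite and irreducible. Consequently, every expected hitting time between subsets of $T$ is bounded by a uniform constant depending only on $T$; this ergodic fact will be the common engine of the argument. Claim~\eqref{eq:L2i} is then immediate: because $T\cap B=T\cap C=\varnothing$, we have $R_B(\sigma[i])=0$ for every~$i$, so $G_0(\sigma)=0$ almost surely and $v_{\mu,\nu}^\gamma(s)=0$ for every~$\gamma$.

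For~\eqref{eq:L2ii}, I would introduce $U_\gamma:=1-v_{\mu,\nu}^\gamma$. From the Bellman recursion implied by~\eqref{eq:return_recursive} together with $T\cap C=\varnothing$, one checks that $U_\gamma(s)=\gamma_B\,\mathbb{E}[U_\gamma(s')]$ for $s\in B$ and $U_\gamma(s)=(1-\gamma)+\gamma\,\mathbb{E}[U_\gamma(s')]$ for $s\notin B$. Unrolling this recursion along the random path and grouping the non-$B$ time steps by the $B$-excursion they belong to gives
\begin{align*}
U_\gamma(s)\ \le\ (1-\gamma)\sum_{k\ge 1}\gamma_B^{\,k-1}\,\mathbb{E}[N_k],
\end{align*}
where $N_k$ counts non-$B$ states between the $(k{-}1)$-th and $k$-th visits to $B$. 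Irreducibility of $T$ together with $T\cap B\ne\varnothing$ bounds every $\mathbb{E}[N_k]$ by a single constant $M$, so $U_\gamma(s)\le CM(1-\gamma)/(1-\gamma_B)$, which tends to $0$ by the first limit in~\eqref{eq:gamma_lim}, establishing~\eqref{eq:L2ii}.

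Claim~\eqref{eq:L2iii} is completely dual. Let $\tau_k^C$ denote the $k$-th $C$-visit (with $\tau_1^C=0$ since $s\in C$) and decompose the sum defining $G_0(\sigma)$ by the inter-$C$ excursion each index belongs to. Every $B$-visit in the $k$-th excursion contributes at most $(1-\gamma_B)\gamma_C^{\,k-1}$ to $G_0$, since by that point the discount has already accumulated $k-1$ factors of $\gamma_C$ and $\gamma,\gamma_B\le 1$. Writing $F_k$ for the number of $B$-visits in the $k$-th excursion, the strong Markov property and irreducibility of $T$ bound $\mathbb{E}[F_k]$ by a uniform constant $F$, so summing the geometric series yields
\begin{align*}
v_{\mu,\nu}^\gamma(s)\ \le\ F\,\frac{1-\gamma_B}{1-\gamma_C},
\end{align*}
which tends to $0$ by the second limit in~\eqref{eq:gamma_lim}; combined with $v_{\mu,\nu}^\gamma\ge 0$ from Lemma~\ref{lemma:inequalities}, this proves~\eqref{eq:L2iii}.

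The main obstacle is translating the intuitive ``$B$-visits dominate'' (Case~(ii)) and ``$C$-visits dominate'' (Case~(iii)) picture into bounds uniform enough to survive the limit $\gamma\to 1^-$. The excursion decomposition achieves this, and the two ratios in~\eqref{eq:gamma_lim} arise at exactly the scales at which the bounds collapse to~$0$. Minor bookkeeping is needed for the very first excursion depending on whether $\sigma[0]$ happens to lie in $B$ (for~\eqref{eq:L2ii}) or in $C$ (for~\eqref{eq:L2iii}), but this only shifts the zeroth term of a convergent geometric series and does not affect the limit.
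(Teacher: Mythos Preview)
Your argument is correct and Case~I is identical to the paper's. For Cases~II and~III you follow a genuinely different route from the paper, so a brief comparison is worthwhile.

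The paper does not pass to $U_\gamma=1-v^\gamma$ or decompose by excursions between visits to the \emph{set} $B$ (resp.\ $C$). Instead it fixes the single distinguished state $s_B$ (resp.\ $s_C$), introduces the random return time $N_t$ (resp.\ $M_t$) to that one state, and then bounds $\mathbb{E}\bigl[\gamma^{N_t}\bigr]\ge\gamma^{\mathbb{E}[N_t]}$ via Jensen's inequality. This leads to the scalar inequality
\[
v^\gamma(s_B)\ \ge\ (1-\gamma_B)+\gamma_B\gamma^{n}\,v^\gamma(s_B),
\]
which is solved algebraically for the lower bound $(1-\gamma_B)/(1-\gamma_B\gamma^n)$ and then pushed to~$1$ using~\eqref{eq:gamma_lim}; Case~III is dual, yielding $v^\gamma(s_C)\le m(1-\gamma_B)/(1-\gamma_C)$. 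Your excursion decomposition instead writes $U_\gamma$ (resp.\ $v^\gamma$) as a sum over $B$-blocks (resp.\ $C$-blocks), bounds each expected block length by a uniform constant coming from positive recurrence of the finite irreducible BSCC, and sums the geometric series directly. The trade-off is that you avoid Jensen altogether and obtain the bound $M(1-\gamma)/(1-\gamma_B)$ (and $F(1-\gamma_B)/(1-\gamma_C)$) in one step, at the cost of some additional bookkeeping with excursion indices; the paper's single-state renewal argument is shorter and produces the same asymptotic bound through a scalar fixed-point inequality. Both arguments rely on exactly the same ergodic input (finite expected return times in a finite irreducible chain) and on the same two limits in~\eqref{eq:gamma_lim}, so they are equivalent in strength.
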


\begin{proof}
To simplify our notation, in this proof and the proof of Theorem~\ref{theorem:rabin}, 
we use $v^\gamma$, $\Gamma$ and $Pr$ instead of $v^\gamma_{\mu,\nu}$, $\Gamma_{B,C}$ and $Pr^{\mathcal{G}}_{\mu,\nu}$. 
We define a return for a finite~path~as:
\begin{align}
    \vspace{-1em} G_{t:t+k}(\sigma) &\coloneqq \sum_{i=0}^{k} R_B(\sigma[t{+}i]) \prod_{j=0}^{i-1} \Gamma_{B,C}(\sigma[t{+}j])
\end{align} 

\noindent\textbf{Case I)}: 
Once a state $s_{\overline{BC}} \in U_{\overline{BC}}$ is reached, it is impossible to later visit a $B$ state and receive a nonzero reward;  thus, the values of all states in $U_{\overline{BC}}$ are zero and~\eqref{eq:L2i}~holds.

\vspace{0.5em}
\noindent\textbf{Case II)}: 
%
Once a state $s_B \in U_B$ is reached, it will be visited infinitely often as it belongs to a BSCC. Let $N_t$ be the time to the next visit to $s_B$ after visiting it at~$t$ -- i.e.,
\begin{align}
N_t = \min \{\tau \mid \sigma[t{+}\tau]=s_B,\tau>0 \}. 
\end{align}

From the definition of the return~\eqref{eq:return_updated}, it holds that
\begin{align}
    v^\gamma(s_B) &= 1{-}\gamma_B + \gamma_B\mathbb{E} [G_{t+1}(\sigma) \mid \sigma[t]{=}s_B] = \notag \\
    &= 1-\gamma_B+\gamma_B\mathbb{E}\Big[G_{t+1:t+N_t-1}(\sigma) \notag \\ &\hspace{-3em}+\Big(\prod\nolimits_{i=1}^{N_t-1} \Gamma({\sigma[t{+}i]})\Big)\cdot G_{t+N_t}(\sigma) \mid \sigma[t]{=}s_B\Big].
\end{align}
We can ignore the return of the prefix, $G_{t+1:t+N_t-1}(\sigma)$ and obtain an upper bound. Using $G_t(\sigma) \geq \gamma G_{t+1}(\sigma)$, we have
\begin{align}
    v^\gamma(s_B) &\geq 1{-}\gamma_B + \gamma_B\mathbb{E}\left[\gamma^{N_t-1} G_{t+N_t}(\sigma) \mid \sigma[t]{=}s_B\right]\notag
    \\ &\overset{\text{\small \ding{192}}}{\geq} 1{-}\gamma_B + \gamma_B\mathbb{E}\left[\gamma^{N_t-1} \mid \sigma[t]{=}s_B\right]v^\gamma(s_B)\notag\\
    &\overset{\text{\small \ding{193}}}{\geq}  1{-}\gamma_B + \gamma_B \gamma^{\mathbb{E}\left[N_t-1 \mid \sigma[t]{=}s_B\right]}v^\gamma(s_B)\notag\\
    &\geq 1-\gamma_B + \gamma_B \gamma^n v^\gamma(s_B) \label{eq:lb_1v}
\end{align}
where \ding{192} holds by the Markov property, \ding{193} holds from the Jensen's inequality, and $n\geq1$ is a constant.
From~\eqref{eq:lb_1v},
\begin{align}
    v^\gamma(s_B) &\geq \frac{1{-}\gamma_B}{1{-}\gamma_B \gamma^n} \overset{\text{\small \ding{194}}}{\geq} \frac{1{-}\gamma_B}{1{-}\gamma_B (1-n(1{-}\gamma))} = \notag \\
    &= \frac{1}{1 + n\frac{1{-}\gamma}{1{-}\gamma_B} - n(1{-}\gamma)}. \label{eq:lb_1} 
\end{align}
where 
\ding{194} holds as $(1- (1{-}\gamma))^n \geq 1 - n (1{-}\gamma)$ for $\gamma \in (0,1)$.
Finally, as $v^\gamma(s_B) \leq 1$ from Lemma~\ref{lemma:inequalities}, letting $\gamma, \gamma_B \to 1^-$ under the condition~\eqref{eq:gamma_lim}, 
concludes the proof of~\eqref{eq:L2ii}.

\vspace{0.5em}
\noindent\textbf{Case III)}: 
Similarly to the previous case, we define a stopping time for the number of time steps between two consecutive visits to a state $s_C \in U_C$ -- i.e., for a fixed~$t \in \mathbb{N}$
\begin{align}
M_t = \min \{\tau \mid \sigma[t{+}\tau]=s_C,\tau>0 \}. 
\end{align}
Now, we can split the value of $s_C$ into two expectations
\begin{align}
    v^\gamma(s_C) &= \gamma_C\mathbb{E}[G_{t+1:t+M_t-1}(\sigma) \mid \sigma[t]{=}s_C] + \notag \\ 
    &\hspace{-3em}\gamma_C\mathbb{E}\Bigg[\Bigg(\prod_{i=1}^{M_t-1} \Gamma({\sigma[t{+}i]})\Bigg)G_{t+M_t}(\sigma)
    \ \bigg\vert \
    \sigma[t]{=}s_C\Bigg]\hspace{-0.1em}.
\end{align}
Using the inequalities in Lemma~\ref{lemma:inequalities}, it holds that
\begin{align}
    v^\gamma(s_C) &\leq \gamma_C\mathbb{E} [1{-}\gamma_B^{M_t}] +\gamma_C \mathbb{E}[G_{t+M_t}(\sigma) \mid \sigma[t]{=}s_C] \notag \\
    &\overset{\text{\small \ding{192}}}{\leq} \gamma_C(1-\gamma_B^{m}) +\gamma_C \mathbb{E}[G_{t+M_t}(\sigma) \mid \sigma[t]{=}s_C] \notag \\
    &\leq
    1-\gamma_B^{m}+\gamma_C  v_{\mu,\nu}^\gamma(s_C) \label{eq:val_ineq}.
\end{align}
where \ding{192} holds by Jensen's inequality
and $m\geq0$ is a constant. Now, from \eqref{eq:val_ineq} it holds that
\begin{align}
    v^\gamma(s_C) \leq \frac{1{-}\gamma_B^{m}}{1{-}\gamma_C} \leq \frac{m(1{-}\gamma_B)}{1{-}\gamma_C}.
\end{align}
Thus, from~\eqref{eq:gamma_lim}, since $v^\gamma(s_C)$ is nonnegative,  \eqref{eq:L2iii} holds.
\end{proof}

We now provide the proof of Theorem~\ref{theorem:rabin}.

\begin{proof}[Proof of Theorem~\ref{theorem:rabin}]
We divide the expected return of a random path $\sigma$ visiting a state $s \in S$ depending on whether it satisfies $\varphi_{U_B}\coloneqq\lozenge U_B$ or not -- i.e.,
\begin{align}
    v^\gamma(s) = & \mathbb{E}[G_t(\sigma) \mid \sigma[t]{=}s, \sigma {\models} \lozenge U_B]Pr(\sigma{\models}\lozenge U_B) \notag \\
    + &\mathbb{E}[G_t(\sigma) \mid \sigma[t]{=}s, \sigma {\not\models} \lozenge U_B]Pr(\sigma {\not\models} \lozenge U_B) \label{eq:v_two_parts}
\end{align}
for some fixed $t \in \mathbb{N}$. 
Notice that $\sigma{\not\models}\lozenge U_B$ implies $\sigma[t{:}]{\not\models}\lozenge U_B$, and $\sigma{\models}\lozenge U_B$ implies $\sigma[t{:}] \models \lozenge U_B$ almost surely. Hence, $Pr(s{\models}\lozenge U_B)$ and $Pr(s{\not\models}\lozenge U_B)$ can be replaced with $Pr(\sigma{\models}\lozenge U_B)$ and $Pr(\sigma{\not\models}\lozenge U_B)$, respectively. 

After visiting the state $s$ at time $t$, let $L_t$ be the 
number of time steps until the first visit to a state in $U_B$ in~\eqref{eq:Us}  --~i.e.,
\begin{align}
L_t =\min \{\tau \mid \sigma[t{+}\tau]\in U_B,\tau>0 \}.  
\end{align}
Then, by Lemma \ref{lemma:inequalities}, it holds that
\begin{align}
    v^\gamma(s) &\geq \mathbb{E}[G_t(\sigma) \mid \sigma[t]{=}s, \sigma {\models} \lozenge U_B]Pr(s{\models}\lozenge U_B) \notag\\
    & \geq \mathbb{E}\left[\gamma^{L_t} G_{t+L_t}(\sigma) \mid \sigma[t]{=}s, \sigma {\models} \lozenge U_B\right]Pr(s{\models}\lozenge U_B) \notag\\
    & \overset{\text{\small \ding{192}}}{\geq}\mathbb{E}\left[\gamma^{L_t} \mid \sigma[t]{=}s, \sigma {\models} \lozenge U_B\right]\underline{v}^\gamma(U_B) Pr(s{\models}\lozenge U_B) \notag\\
    & \overset{\text{\small \ding{193}}}{\geq}\gamma^{\mathbb{E}\left[L_t \mid \sigma[t]{=}s, \sigma {\models} \lozenge U_B\right]}\underline{v}^\gamma(U_B) Pr(s{\models}\lozenge U_B) = \notag\\
    & = \gamma^l \underline{v}^\gamma(U_B)Pr(s{\models}\lozenge U_B) \label{eq:buchi_v_lower};
\end{align}
here, $\underline{v}^\gamma(U_B) = \min_{s_B\in U_B} v^\gamma(s_B)$, $l$ is constant, and \ding{192} and \ding{193} hold from the Markov property and Jensen's~inequality.

Similarly, after leaving $s$ at $t$, let $L_t'$ be the 
 number of time steps until the first time a state in $U_{\overline{BC}} \cup U_C$ is reached~--~i.e.,
\begin{align}
    L_t' & =\min \big\{\tau \mid \sigma[t{+}\tau]\in U_{\overline{BC}} \cup U_C, \tau>0 \big\}.
\end{align}
 Then, using Lemma~\ref{lemma:inequalities} and the Markov~property, it holds that
\begin{align}
    v^\gamma(s) &\leq \mathbb{E}[G_t(\sigma) \mid \sigma[t]{=}s, \sigma {\not\models} \lozenge U_B]Pr(s {\not\models} \lozenge U_B)\notag \\
    &\hspace{2em}+Pr(s{\models}\lozenge U_B) \notag\\
    &\leq \mathbb{E}[1{-}\gamma_B^{L_t'}\mid \sigma[t]{=}s, \sigma {\not\models} \lozenge U_B]Pr(s {\not\models} \lozenge U_B)\notag \\
    &\hspace{2em}+Pr(s{\models}\lozenge U_B) \notag\\
    &\leq 1{-}\gamma_B^{\mathbb{E}[L_t' \mid \sigma[t]{=}s, \sigma \not\models \lozenge U_B]} Pr(s {\not\models} \lozenge U_B)\notag \\
    &\hspace{2em}+Pr(s{\models}\lozenge U_B) \notag\\
    &= (1{-}\gamma_B^{l'}) Pr(s {\not\models} \lozenge U_B)+Pr(s{\models}\lozenge U_B),
    \label{eq:b2}
\end{align}
where $l'$ is some constant. The upper bound~\eqref{eq:b2} and the lower bound~\eqref{eq:buchi_v_lower} (due to~\eqref{eq:L2ii}) approach the probability $Pr(s{\models}\lozenge U_B)$
as $\gamma \to 1^{-}$, thereby concluding the proof. 
\end{proof}

\subsection{Reduction to Stochastic Rabin($1$) Games} \label{section:rabin_k}

We now provide a generalization of our approach from Section~\ref{section:rabin_1} to the Rabin conditions with $k$ pairs.~The~idea is to construct $k$ different stochastic Rabin($1$) games and connect them with $\varepsilon$ actions so that the controller is able to switch between the Rabin pairs it aims to satisfy. 

\begin{definition}[$k$-copy Game]
Let $[n]$ denote the set $\{1,2,\dots,n\}$ for a positive integer $n$. For a given stochastic Rabin($k$) game $\mathcal{G}=(S, (S_\mu, \allowbreak S_\nu), A, P, s_0, \textnormal{Acc})$, with $\textnormal{Acc} {=} \{(C_i,B_i))_{i=0}^k\}$, a $k$-copy game $\mathcal{G}^\star=(S^\star,(S^\star_\mu,S^\star_\nu),\allowbreak A^\star,P^\star,s^\star_0,\textnormal{Acc}^\star)$ is a stochastic Rabin($1$) game defined by:
\begin{itemize}
    \item $S^\star = \left(S_\mu \times [2k]\right) \cup \left(S_\nu \times [k]\right)$ is the augmented state set with $S^\star_\mu = S_\mu \times [k]$ the controller and
    $S^\star_\nu = S \setminus S^\star_\mu$ the environment states, and $s^\star_0=\langle s_0,1\rangle$ is the initial~state;
    \item $A^\star = A \cup \{\varepsilon_i \mid i \in [k]\} \cup \{\varepsilon'\}$ is the set of actions; 
    \item $P^\star:S^\star {\times} A^\star {\times} S^\star \to [0,1]$ is the transition function defined as $P^\star(\langle s,i \rangle,a,\langle s',i'\rangle)$
    \begin{align*}
        =\begin{cases}
            P(s,a,s') & \textrm{if }a \in A, i=i', \\
            1 & \textrm{if }s\in S_\mu, s=s', a=\varepsilon_i, i'=k+i, \\
            1 & \textrm{if }s\in S_\nu, s=s', a=\varepsilon', i'=i-k, \\
            0, & \textnormal{otherwise};
        \end{cases} \notag 
    \end{align*}
    \item $\textnormal{Acc}^\star=\{(C^\star,B^\star)\}$ is the Rabin accepting set where
    \begin{align*}
    \hspace{-0.5em}C^\star &\coloneqq \{\langle s,i \rangle \mid s\in C_i, i \in [k]\text{ or } s\in S_\mu, i \in [2k]\setminus[k]\}, \\
    \hspace{-0.5em}B^\star &\coloneqq \{\langle s,i \rangle \mid s\in B_i, \ i \in [k]\}.
    \end{align*}
\end{itemize}
\end{definition}

\begin{figure}
    \centering
    \includegraphics{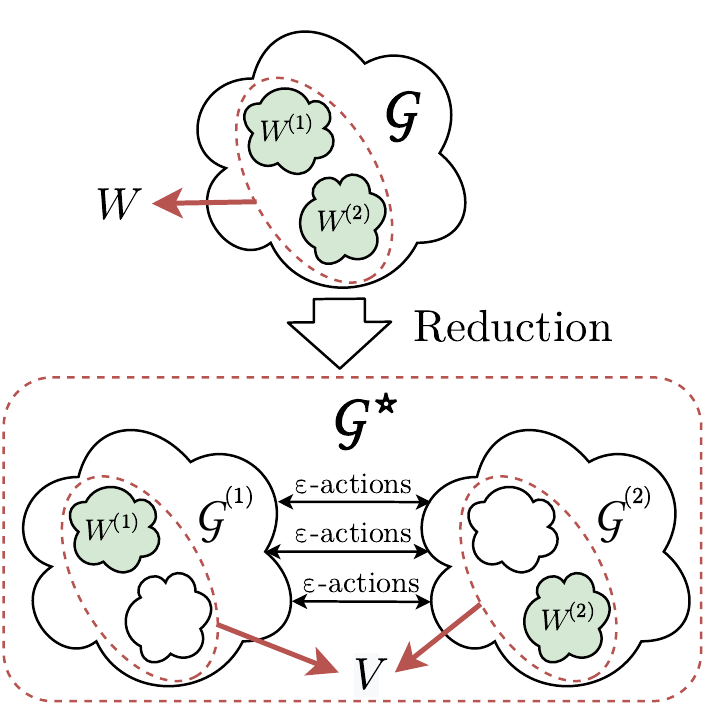}
    \caption{A $k$-copy game obtained from a stochastic Rabin($2$) game. $W^{(1)}$ and $W^{(2)}$ denote the winning sets for the first and the second Rabin pair respectively.}
    \label{fig:k-copy}
\end{figure}

Intuitively, the $k$-copy game $\mathcal{G}^\star$ consists of $k$ exact copies of the original game $G$ for each accepting pair, and a dummy state $\langle s,i{+}k \rangle$ for every controller state $s \in S_\mu$ for each copy $i \in [k]$ (Fig.~\ref{fig:k-copy}). The controller can choose an $\varepsilon_j$ in a state $\langle s, i \rangle$ and makes a transition to the dummy environment state $\langle s,j{+}k \rangle$ where the environment can only take the action $\varepsilon'$, which makes a transition to the controller state $\langle s,j \rangle$. The idea here is to connect the $k$ copies of the original game using these $\varepsilon$-actions so that in any state, the controller can jump to the $j$-th copy via an $\varepsilon_j{\to}\text{a-dummy-state}{\to}\varepsilon'$ sequence. All the dummy states belong to $C^\star$, prohibiting the $\varepsilon$-actions from being visited infinitely many times. Also, the only states belonging to $C^\star$ and $B^\star$ in the $i$-th copy are the ones belonging to $C_i$ and $B_i$, respectively. This allows each accepting pair to be independently satisfied in its corresponding copy as stated in the following theorem.

\begin{theorem}\label{theorem:star_probs}
Let $\mathcal{G}^{(j)}$ be the stochastic Rabin($1$) game obtained from a Rabin($k$) game $\mathcal{G}$ by replacing $\textnormal{Acc}$ with $\{(C_j,B_j)\}$, and $W^{(j)}$ be the set of winning states such that for any $s\in W^{(j)}$, $Pr_*^{\mathcal{G}^{(j)}}(s \models \varphi_{B_j,C_j})=1$. Then, for any $\langle s,i \rangle \in S^\star$, it holds that
\vspace{-2pt}
\begin{align} \label{eq:lemma1}
    Pr^{\mathcal{G}^\star}_*(\langle s,i \rangle \models \varphi_{B^\star,C^\star}) = Pr^{\mathcal{G}^\star}_*(\langle s,i \rangle \models \lozenge V),
\end{align}
where $V = \big\{\langle s',i' \rangle \in S^\star \mid s' \in \bigcup\nolimits_{j=0}^{k} W^{(j)}\big\}$.
\end{theorem}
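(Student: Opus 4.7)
The plan is to establish the equality by showing that the set $V$ coincides with the almost-sure winning region $W^\star$ of $\mathcal{G}^\star$ for the Rabin objective $\varphi_{B^\star,C^\star}$, and then invoking the classical identity for stochastic $\omega$-regular games: the value at any state equals the max-min probability of reaching the almost-sure winning region~\cite{chatterjee2012}. I would exploit throughout the existence of pure memoryless optimal strategies for both players, both in the Rabin($1$) game $\mathcal{G}^\star$ and in each single-pair subgame $\mathcal{G}^{(j)}$.

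For the inclusion $V \subseteq W^\star$, take any $\langle s,i\rangle \in V$ with $s \in W^{(j)}$ and construct the following controller strategy in $\mathcal{G}^\star$: play $\varepsilon_j$, which forces a single transition to the dummy state $\langle s, k{+}j\rangle \in C^\star$, whose only environment action $\varepsilon'$ deterministically returns to $\langle s,j\rangle$; from there, lift the almost-sure winning strategy of $\mathcal{G}^{(j)}$ from $s$ and keep the play inside copy $j$. Since within copy $j$ the restrictions of $B^\star$ and $C^\star$ coincide with $B_j$ and $C_j$, and since $\varepsilon$-actions occur only at the initial step, the play visits $B^\star$ infinitely often and $C^\star$ at most once, almost surely satisfying $\varphi_{B^\star,C^\star}$.

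For the reverse inclusion $W^\star \subseteq V$, fix any $\langle s,i\rangle \in W^\star$ and a pure memoryless strategy $\mu^\star$ that wins $\varphi_{B^\star,C^\star}$ almost surely against every environment strategy. Any trajectory using $\varepsilon$-actions infinitely often visits the dummy states (all in $C^\star$) infinitely often and thus fails the Rabin condition, so almost every play under $\mu^\star$ must eventually commit to a single copy $j^\star$ and take only $A$-actions thereafter. Because $\mu^\star$ is memoryless and $\varepsilon$-actions preserve the $S$-component, the initial $\varepsilon$-chain starting at $\langle s,i\rangle$ is a deterministic finite sequence ending at some $\langle s, j^\star\rangle$ where $\mu^\star$ selects an action in $A$. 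The projection of $\mu^\star$ to the states of copy $j^\star$ reached from here defines a strategy in $\mathcal{G}^{(j^\star)}$ that wins $\varphi_{B_{j^\star},C_{j^\star}}$ almost surely against every adversary, because any adversary in $\mathcal{G}^{(j^\star)}$ can be lifted to one in $\mathcal{G}^\star$ that plays identically in copy $j^\star$, against which $\mu^\star$ still wins by assumption. Hence $s \in W^{(j^\star)}$, i.e., $\langle s,i\rangle \in V$.

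The main technical obstacle is the direction $W^\star \subseteq V$. The naive path-wise argument ``every Rabin-winning trajectory visits $V$'' is false in general, because a specific trajectory may satisfy $\varphi_{B_j,C_j}$ by chance against a sub-optimal adversary without ever entering $W^{(j)}$; the argument must therefore lean on the universal optimality of $\mu^\star$ together with the memoryless determinacy of Rabin($1$) games. Once $V = W^\star$ is established, the desired equality $Pr^{\mathcal{G}^\star}_*(\langle s,i\rangle \models \varphi_{B^\star,C^\star}) = Pr^{\mathcal{G}^\star}_*(\langle s,i\rangle \models \lozenge V)$ follows directly from the classical value--reachability equivalence for stochastic games with $\omega$-regular objectives.
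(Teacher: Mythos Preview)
Your inclusion $V\subseteq W^\star$ is correct and matches the paper's ``$\geq$'' direction. The reverse inclusion $W^\star\subseteq V$, on which your whole plan rests, is however false. Take $k=2$ and $\mathcal{G}$ with a controller state $s$ whose only $A$-action leads to an environment state $e$; from $e$ the adversary plays $\alpha$ (back to $s$) or $\beta$ (to an absorbing controller state $s_1$). Let $B_1=\{s\}$, $C_1=\varnothing$, $B_2=\{s_1\}$, $C_2=\{s\}$. Then $s\notin W^{(1)}$ (the adversary plays $\beta$, trapping the run at $s_1\notin B_1$) and $s\notin W^{(2)}$ (the adversary plays $\alpha$ forever, making $s\in C_2$ recurrent), so $\langle s,1\rangle\notin V$. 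Yet $\langle s,1\rangle\in W^\star$: stay in copy~$1$; if the adversary ever plays $\beta$ the controller $\varepsilon$-switches to copy~$2$ at $s_1$ and loops there, while if the adversary plays $\alpha$ forever the run $\langle s,1\rangle\leftrightarrow\langle e,1\rangle$ already satisfies $(C^\star,B^\star)$ inside copy~$1$. Your ``projection of $\mu^\star$ to copy $j^\star$'' step fails exactly here: the optimal $\mu^\star$ plays an $\varepsilon$-action at the \emph{later} state $\langle s_1,1\rangle$, so no well-defined single-copy projection exists, and the copy in which the play eventually settles depends on the adversary.

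In fact the same example shows more: $Pr^{\mathcal{G}^\star}_*(\langle s,1\rangle\models\varphi_{B^\star,C^\star})=1$ whereas $Pr^{\mathcal{G}^\star}_*(\langle s,1\rangle\models\lozenge V)=0$, since by playing $\alpha$ forever the adversary confines the $S$-component to $\{s,e\}$, disjoint from $\bigcup_j W^{(j)}=\{s_1\}$. So the equality asserted in the theorem does not hold in general; only the inequality ``$\geq$'' survives---and that is all Corollary~\ref{corollary:acceptance_lower_bound} actually uses. The paper's own ``$\leq$'' argument hinges on the claim that states of an accepting BSCC under $(\mu^\star,\nu)$ must be winning for some single pair; with $\nu\equiv\alpha$ the accepting BSCC $\{\langle s,1\rangle,\langle e,1\rangle\}$ gives a direct counterexample to that claim as well.
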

\begin{proof}
We prove \eqref{eq:lemma1} in two directions.

\vspace{2pt}\noindent$\geq:$ If a state $\langle s,i \rangle \in V$, then, by definition, there must be some $j$ such that $s \in W^{(j)}$. The controller can make a transition from $\langle s,i \rangle$ to $\langle s,j \rangle$ via the $\varepsilon$-actions and satisfy $\varphi_{B^\star,C^\star}$ by satisfying $\varphi_{B_j,C_j}$. Thus, the control strategy that maximizes the reachability probabilities in the worst case also guarantees that the satisfaction probabilities are at least the maximin reachability probabilities, i.e., $Pr^{\mathcal{G}^\star}_*(\langle s,i \rangle \models \varphi_{B^\star,C^\star}) \geq Pr^{\mathcal{G}^\star}_*(\langle s,i \rangle \models \lozenge V)$.

\vspace{4pt}\noindent$\leq:$ All the transitions via the $\varepsilon$-actions pass through a state in $C^\star$. Under any strategy pair, the BSCCs having $\varepsilon$-transitions of the induced MC are rejecting. Since without some $\varepsilon$-transitions, it is not possible for a BSCC to contain states from two different accepting pairs, an accepting BSCC must satisfy only a single pair. In addition, in the worst case, the satisfaction probability  can be maximized by maximizing the probability of reaching a state that belongs to an accepting BSCC for any environment strategy. Thus, such states must be a winning state for some accepting pair, which implies that $Pr^{\mathcal{G}^\star}_*(\langle s,i \rangle \models \varphi_{B^\star,C^\star}) \leq Pr^{\mathcal{G}^\star}_*(\langle s,i \rangle \models \lozenge V)$.
\end{proof}

Any control strategy $\mu^\star$ in $\mathcal{G}^\star$ has a corresponding finite-memory strategy $\mu$ in the Rabin($k$) game $\mathcal{G}$, which can be captured by a deterministic finite automaton (DFA) with $k$ states. In state $s$, the state of the DFA changes from state $i{\in}[k]$ to $j{\in}[k]$, 
if $\mu^\star(\langle s,i \rangle) {=} \varepsilon_j$; the DFA state stays the same and the control strategy $\mu$ chooses action $a{\in} A$ if  $\mu^\star(\langle s,i \rangle) {=} a$. If $\mu^\star$ is a maximin strategy for $\mathcal{G}^\star$~then under the induced strategy $\mu$, the controller satisfies~the~acceptance condition with probability that is not lower than the probability of reaching a winning state of an accepting pair.

\begin{corollary} \label{corollary:acceptance_lower_bound}
A maximin control strategy for $\mathcal{G}^\star$ of a stochastic Rabin game $\mathcal{G}$ with $k$ accepting pairs induces a control strategy $\mu$ for $\mathcal{G}$ such that
\begin{align}
    Pr_{\mu,\nu}\left(\mathcal{G} \models \varphi_{\textnormal{Acc}} \right) \geq 
    Pr_*\left(\mathcal{G} \models \lozenge W \right) \label{eq:acceptance_lower_bound}
\end{align}
for any environment strategy $\nu$, where 
$$\varphi_{\textnormal{Acc}} \coloneqq  \bigvee\limits_{(B_i,C_i)\in\textnormal{Acc}} \left({\square} {\lozenge} B_i {\wedge} {\neg} {\square} {\lozenge} C_i\right), \quad W \coloneqq \bigcup\limits_{i=1}^k W^{(i)}$$ 
with $W^{(i)}$ defined as in Theorem~\ref{theorem:star_probs}.
\end{corollary}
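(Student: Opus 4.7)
The plan is to chain together three facts: (i) any path in $\mathcal{G}^\star$ satisfying the single Rabin condition $\varphi_{B^\star,C^\star}$ projects to a path in $\mathcal{G}$ satisfying $\varphi_{\textnormal{Acc}}$; (ii) by Theorem~\ref{theorem:star_probs}, the maximin satisfaction probability of $\varphi_{B^\star,C^\star}$ in $\mathcal{G}^\star$ equals the maximin probability of reaching $V$ there; and (iii) this reachability value dominates the maximin probability of reaching $W$ in $\mathcal{G}$. Let $\mu^\star$ be a maximin control strategy in $\mathcal{G}^\star$ and $\mu$ the induced finite-memory strategy on $\mathcal{G}$ defined via the $k$-state DFA described just before the corollary.

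First, I would fix an arbitrary environment strategy $\nu$ on $\mathcal{G}$ and lift it to a strategy $\nu^\star$ on $\mathcal{G}^\star$ that ignores the copy index (and is forced to play $\varepsilon'$ on dummy environment states). The map that drops the copy coordinate and collapses the deterministic $\varepsilon$-transitions is then a measure-preserving bijection between paths of the induced MCs. If a path of $\mathcal{G}^\star$ satisfies $\varphi_{B^\star,C^\star}$, then since all dummy states lie in $C^\star$ the $\varepsilon$-actions are taken only finitely often, so the copy index stabilizes at some $j\in[k]$; because the $B^\star$ and $C^\star$ states inside copy $j$ are precisely $B_j\times\{j\}$ and $C_j\times\{j\}$, the projected path witnesses $\square\lozenge B_j \wedge \neg\square\lozenge C_j$ and hence $\varphi_{\textnormal{Acc}}$. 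This gives $Pr^{\mathcal{G}}_{\mu,\nu}(\varphi_{\textnormal{Acc}}) \geq Pr^{\mathcal{G}^\star}_{\mu^\star,\nu^\star}(\varphi_{B^\star,C^\star}) \geq Pr^{\mathcal{G}^\star}_*(\varphi_{B^\star,C^\star})$, where the last inequality uses that $\mu^\star$ is maximin. Applying Theorem~\ref{theorem:star_probs} replaces the right-hand side by $Pr^{\mathcal{G}^\star}_*(\lozenge V)$.

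It then remains to show $Pr^{\mathcal{G}^\star}_*(\lozenge V) \geq Pr^{\mathcal{G}}_*(\lozenge W)$. For this, I would take a memoryless maximin reachability strategy $\mu'$ in $\mathcal{G}$ for $\lozenge W$ and lift it to a strategy $\mu'^{\star}$ on $\mathcal{G}^\star$ that never selects an $\varepsilon_j$ and always mimics $\mu'$ inside copy~$1$. Under $\mu'^{\star}$ every reachable path stays in copy $1$ for all time, so the induced MCs on $\mathcal{G}^\star$ (for any $\nu^\star$) are in measure-preserving bijection with induced MCs on $\mathcal{G}$ under the copy-$1$ restriction of $\nu^\star$, and $V\cap(S\times\{1\})$ matches $W\times\{1\}$. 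This yields $Pr^{\mathcal{G}^\star}_{\mu'^{\star},\nu^\star}(\lozenge V) = Pr^{\mathcal{G}}_{\mu',\nu}(\lozenge W) \geq Pr^{\mathcal{G}}_*(\lozenge W)$, and taking $\max_{\mu^\star}\min_{\nu^\star}$ on the left gives the desired domination.

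The main obstacle I expect is the careful bookkeeping of the strategy liftings and projections, in particular verifying that the minimization over environment strategies on the two sides aligns correctly, since $\nu^\star$ in $\mathcal{G}^\star$ may in principle condition on the copy index while environment strategies in $\mathcal{G}$ cannot. What saves us is that on accepting traces of $\mathcal{G}^\star$ only one copy is used after a finite prefix, and in the lifted strategy $\mu'^{\star}$ only copy $1$ is ever visited, so the extra degree of freedom for the environment is never realized on the relevant sample paths. Once this alignment is formalized, the three inequalities combine directly into~\eqref{eq:acceptance_lower_bound}.
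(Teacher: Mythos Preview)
Your proposal is correct and follows essentially the same route as the paper: lift $\nu$ to a copy-blind $\nu^\star$, observe that $(\mu^\star,\nu^\star)$ and $(\mu,\nu)$ induce the same MC up to the deterministic $\varepsilon$-detours, use the pathwise implication $\varphi_{B^\star,C^\star}\Rightarrow\varphi_{\textnormal{Acc}}$, and invoke Theorem~\ref{theorem:star_probs}. The paper's proof is terser and simply says ``combined with Theorem~\ref{theorem:star_probs} concludes the proof,'' implicitly treating $Pr^{\mathcal{G}^\star}_*(\lozenge V)\geq Pr^{\mathcal{G}}_*(\lozenge W)$ as immediate; your step~(iii), where you lift a maximin $\lozenge W$-strategy into copy~$1$ and never take $\varepsilon$-actions, makes this explicit, as does your discussion of why the environment's ability to see the copy index is harmless.
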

\begin{proof}
For any environment strategy $\nu$ in $\mathcal{G}$ we can construct a corresponding environment strategy $\nu^\star$ in $\mathcal{G}^\star$ such that $\nu^\star(\langle s,i \rangle)=\nu(s)$ for all $i \in [k]$ and $\nu^\star(\langle s,i \rangle)=\varepsilon'$ for all $[2k]\setminus[k]$. Note that the strategy pairs $(\mu,\nu)$ and $(\mu^\star,\nu^\star)$ induce the same MCs. Since satisfying $(C_j,B_j)$ satisfies $\varphi_{\textnormal{Acc}}$, we have $Pr_{\mu,\nu}\left(\mathcal{G} \models \varphi_{\textnormal{Acc}} \right) \geq Pr_{\mu^\star,\nu^\star}\left(\mathcal{G}^\star \models \varphi_{B^\star,C^\star} \right)$, which combined with Theorem \ref{theorem:star_probs} concludes the proof.
\end{proof}

\begin{figure}
    \centering
    \includegraphics[width=0.42\textwidth]{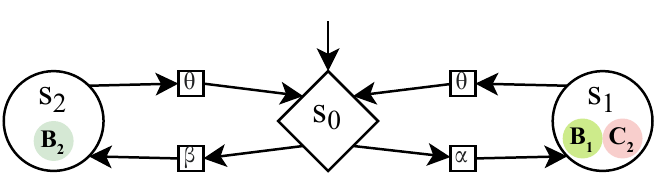}
    \caption{A stochastic Rabin game with two accepting pairs: $(C_1,B_1){=}(\varnothing,\{s_1\})$ and $(C_2,B_2){=}(\{s_1\},\{s_2\})$. The initial state $s_0$ is controlled by the environment; $s_1$ and $s_2$ are the controller states; $\alpha$, $\beta$ and $\theta$ are actions; and all transition probabilities are 1.}
    \label{fig:srg}
\end{figure}

The induced control strategy $\mu$ guarantees a satisfaction probability that is larger than or equal to $Pr_*\left(\mathcal{G} \models \lozenge W \right)$. However, for some stochastic games, there exist some control strategies with improved lower bounds. This is due to the fact that there could be other winning states not belonging to $W$. One such game is illustrated in Fig.~\ref{fig:srg}. In this game,~if the environment chooses the action $\beta$ all the time, the generated path  does not satisfy the first accepting pair; similarly, if it chooses $\alpha$, the path does not satisfy the second accepting~pair. Thus, neither $s_1$ nor $s_2$ belong to $W$ of~$\mathcal{G}$; yet, they are winning states since independently from the used environment strategy, an accepting pair is always satisfied.

\subsection{Controller Synthesis via Reinforcement Learning}

We now state the main result of our approach.
\begin{theorem} \label{theorem:main_result}
For a given stochastic Rabin($k$) game $\mathcal{G}$, there exists a $\gamma'$ such that for all $\gamma$, such that $\gamma'<\gamma<1$, a minimax-Q learning algorithm using the reward and the discount functions in Theorem~\ref{theorem:rabin} is guaranteed to converge to a strategy $\mu$ satisfying (\ref{eq:acceptance_lower_bound}). 
\end{theorem}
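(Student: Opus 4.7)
The plan is to reduce the Rabin($k$) problem to a Rabin($1$) problem via the $k$-copy construction, run minimax-Q on that copy game, and then use Theorem~\ref{theorem:rabin} together with a finiteness argument to show that the learned strategy is maximin for the satisfaction probability, so that Corollary~\ref{corollary:acceptance_lower_bound} delivers~\eqref{eq:acceptance_lower_bound} on the original game.

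\textbf{Step 1 (Reduction).} First I would apply the $k$-copy construction to $\mathcal{G}$ to obtain the stochastic Rabin($1$) game $\mathcal{G}^\star$ with accepting pair $(C^\star,B^\star)$, and equip it with the reward function $R_{B^\star}$ and discount function $\Gamma_{B^\star,C^\star}$ from Theorem~\ref{theorem:rabin} (parameterized by $\gamma$, with $\gamma_B(\gamma)$ and $\gamma_C(\gamma)$ satisfying~\eqref{eq:gamma_lim}).

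\textbf{Step 2 (Minimax-Q convergence for fixed $\gamma$).} Since $\mathcal{G}^\star$ is a finite turn-based zero-sum stochastic game with bounded rewards and the effective discount on every transition is at most $\gamma<1$, the standard minimax-Q convergence result of~\cite{littman1994,bowling2000} applies: under the usual Robbins--Monro step-size schedule and the assumption that every state-action pair is visited infinitely often, the learned Q-values converge almost surely to the unique fixed point of the minimax Bellman operator, and the greedy policy derived from them is a pure memoryless maximin strategy $\mu^\star_\gamma$ for the $\gamma$-discounted return~$v_{\mu,\nu}^{\gamma}$ on $\mathcal{G}^\star$.

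\textbf{Step 3 (From discounted optimality to probabilistic optimality).} This is the main obstacle: I need to show that there exists $\gamma'<1$ such that for every $\gamma\in(\gamma',1)$, the $\gamma$-discounted maximin strategy $\mu^\star_\gamma$ on $\mathcal{G}^\star$ is \emph{also} a maximin strategy for $Pr^{\mathcal{G}^\star}(\,\cdot\models \varphi_{B^\star,C^\star})$. The argument uses finiteness: there are only finitely many pure memoryless strategy pairs $(\mu,\nu)$ on $\mathcal{G}^\star$, hence finitely many possible values of the satisfaction probability $p_{\mu,\nu}(s) := Pr^{\mathcal{G}^\star}_{\mu,\nu}(s\models\varphi_{B^\star,C^\star})$. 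Let $\Delta>0$ be the minimum positive gap between distinct values of $\max_\mu\min_\nu p_{\mu,\nu}(s)$ and any suboptimal $\min_\nu p_{\mu,\nu}(s)$ across all states $s$. By Theorem~\ref{theorem:rabin}, for every fixed pair $(\mu,\nu)$ we have $v_{\mu,\nu}^{\gamma}(s)\to p_{\mu,\nu}(s)$ as $\gamma\to 1^-$; since there are only finitely many such pairs, this convergence is uniform in $(\mu,\nu)$, so one can choose $\gamma'$ with $|v_{\mu,\nu}^{\gamma}(s)-p_{\mu,\nu}(s)|<\Delta/3$ for all $\gamma\in(\gamma',1)$, all $s$, and all pure memoryless pairs. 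A standard interchange-of-$\max\min$ comparison then forces the discounted maximin strategy to agree with the probabilistic maximin strategy on $\mathcal{G}^\star$ above the threshold $\gamma'$.

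\textbf{Step 4 (Transfer to $\mathcal{G}$).} Finally, given that $\mu^\star_\gamma$ is a maximin control strategy on $\mathcal{G}^\star$ for the satisfaction probability of $\varphi_{B^\star,C^\star}$, Theorem~\ref{theorem:star_probs} identifies this probability with $Pr^{\mathcal{G}^\star}_*(\langle s,i\rangle\models\lozenge V)$, and Corollary~\ref{corollary:acceptance_lower_bound} translates $\mu^\star_\gamma$ into a finite-memory control strategy $\mu$ on $\mathcal{G}$ satisfying $Pr_{\mu,\nu}(\mathcal{G}\models\varphi_{\textnormal{Acc}})\geq Pr_*(\mathcal{G}\models \lozenge W)$ for every environment strategy $\nu$, which is exactly~\eqref{eq:acceptance_lower_bound}. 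Combined with the almost-sure convergence of minimax-Q from Step~2 and the threshold $\gamma'$ from Step~3, this completes the argument. I expect Step~3 to be the delicate part, because it is the only place where the free parameter $\gamma$ must be pinned down uniformly across all pure memoryless strategy pairs and all states of $\mathcal{G}^\star$.
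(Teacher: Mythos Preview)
Your proposal is correct and follows essentially the same approach as the paper. The paper's own proof is a one-sentence sketch citing Theorem~\ref{theorem:rabin}, Corollary~\ref{corollary:acceptance_lower_bound}, and the finiteness and sufficiency of pure memoryless strategies in stochastic Rabin($1$) games; your Steps~1--4 simply unpack that sketch, with Step~3 making explicit the uniform-convergence/threshold argument that the paper leaves implicit in the word ``finite.''
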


\begin{proof}
The claim directly  follows from Theorem~\ref{theorem:rabin}, Corollary~\ref{corollary:acceptance_lower_bound}, and the fact that pure and memoryless strategies are finite and sufficient for both the controller and the environment in stochastic Rabin($1$) games~\cite{chatterjee2012}.
\end{proof}

For a given stochastic game and an LTL specification, we can reduce the control synthesis problem to finding a maximin controller strategy in a stochastic Rabin($k$) game~$\mathcal{G}$ using the standard automata-based approach described in Section \ref{section:product}. This can be further reduced to finding a strategy that maximizes the probability of satisfying a single Rabin pair in the worst case, in a stochastic Rabin($1$) game $\mathcal{G}^\star$ using the method from Section \ref{section:rabin_k}. In Section \ref{section:rabin_1}, we provide an approach that transforms the objective of satisfying of a Rabin pair to a discounted reward maximization objective, allowing the use of RL to synthesize controllers. 

Algorithm \ref{alg:alg1} summarizes the steps of our approach. Here, $\alpha$ is the learning rate and the bold \textbf{s} character denotes a three-dimensional state vector: the state of the original game, the DRA state, and the index of Rabin pair. After the construction of $\mathcal{G}^\star$, the algorithm performs simple minimax-Q learning. In each iteration of learning, the algorithm derives an $\epsilon$-greedy strategy pair, which means that under these strategies, the controller and the environment randomly choose their actions with probability of $\epsilon$, as well choose their best action with probability of $1{-}\epsilon$. After the convergence, the algorithm returns a maximin control strategy $\mu^\star_*$ for $\mathcal{G}^\star$, which induces a finite-memory strategy for the original game, which guarantees the lower bound provided in Corollary~\ref{theorem:main_result}.

Computing the winning states in stochastic Rabin games is NP-Complete in the number accepting pairs \cite{chatterjee2012}. Thus, it is unlikely to construct a stochastic Rabin($1$) game from any given stochastic Rabin($k$) game without an exponential blowup in the number of states. Instead, we provide a simple yet powerful approach combining all the accepting pairs that only requires a linear increase in the number of states. 

\begin{algorithm}
\begin{algorithmic}
\STATE \textbf{Input:} LTL formula $\varphi$, stochastic game $\mathcal{G}$
\STATE Translate $\varphi$ to a DRA $\mathcal{A}_\varphi$
\STATE Construct the product $\mathcal{G}^\times$ of $\mathcal{G}$ and $\mathcal{A}_\varphi$
\STATE Reduce  $\mathcal{G}^\times$ to  $\mathcal{G}^\star$
\STATE Initialize $Q(\textbf{s}, a)$ on $\mathcal{G}^\star$
\FOR {$t = 0,1,\dots, T$}
\STATE Derive an $\epsilon$-greedy strategy pair $(\mu^\star,\nu^\star)$ from $Q$
\STATE Take the action $a_t \leftarrow \begin{cases}\mu^\star(\textbf{s}_t), & \textbf{s}_t {\in} S^\star_\mu \\ \nu^\star(\textbf{s}_t), & \textbf{s}_t {\in} S^\star_\nu \end{cases}$
\STATE Observe the next state $\textbf{s}_{t+1}$
\STATE $Q(\textbf{s}_t,a_t) \leftarrow (1-\alpha) Q(\textbf{s}_t,a_t) + \alpha  R(\textbf{s}_t)$
\STATE \hspace{0.1em} $+ \alpha  \Gamma(\textbf{s}_t) \cdot \begin{cases}\max_{a'} Q(\textbf{s}_{t+1},a'), & \textbf{s}_{t+1} {\in} S^\star_\mu \\ \min_{a'} Q(\textbf{s}_{t+1},a'), & \textbf{s}_{t+1} {\in} S^\star_\nu \end{cases}$
\ENDFOR
\STATE Get a greedy control strategy $\mu^\star_*$ from $Q$
\RETURN $\mu^\star_*$
\end{algorithmic}
\caption{Model-free RL for control synthesis in stochastic games from LTL specifications.}\label{alg:alg1}
\end{algorithm}


\section{Experimental Results}

We implemented a software tool \cite{csrl2020} in \textit{Python} that takes a description of a labeled stochastic game and an LTL specification of a task, and outputs the desired control strategy using RL. Our tool uses \textit{Rabinizer 4} \cite{kvretinsky2018} to translate the given LTL formula into a DRA then constructs the product game of the given game and the DRA. It performs minimax-Q learning using the presented reward and discount~functions.

During learning, $\epsilon$-greedy strategies are followed by both players after starting in a random state, and the episodes are terminated after 1K steps. We set the parameter $\epsilon$ and the learning rate $\alpha$ to 0.5 and gradually decreased them to $0.05$ during learning; we used the discount factors of $\gamma_C{=}1{-}(0.01)$, $\gamma_B{=}1{-}(0.01)^2$ and $\gamma{=}1{-}(0.01)^3$. 

We 
considered robot planning tasks on two different scenarios in two-dimensional $(5\times 5)$ grid worlds. The controller navigates a robot which occupies one cell and can move to adjacent four cells in a single time step using four actions: \textit{North, South, East} and \textit{West}. There are three types of cells: empty cells, cells with an obstacle and absorbing cells. When the robot tries to move to a cell with an obstacle, it hits the obstacle and stays in its previous position; once it moves to an absorbing cell it cannot leave it. 
In the figures, obstacles and absorbing cells are represented by filled and empty circles. Each cell is labeled with a set of atomic propositions, depicted as encircled letters in the lower part of the cells.

\subsection{Robust Control Design}
In our first case study,
the robot can unpredictably move in a direction that is orthogonal to the intended direction after taking an action. We model this source of nondeterminism as the environment,  observing the actions of the controller and acting to minimize the probability that the controller achieves the given task. The controller, in this case, tries to come up with a robust and conservative strategy that maximizes the probability of achieving the task in the worst case.

\begin{figure}
    \centering
    \includegraphics[width=0.3\textwidth]{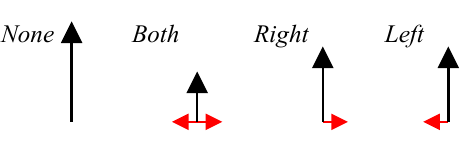}
    \caption{Illustration of the actions of the environment for the control action \textit{North}. The arrow lengths are loosely proportional to the probabilities of the movement directions.}
    \label{fig:secondary_actions}
\end{figure}
    
The environment can reactively choose one of the following four actions: \textit{None, Both, Right} and \textit{Left} (Fig.~\ref{fig:secondary_actions}). Fpr \textit{None}, the robot moves in the intended direction without any disturbance. If \textit{Both} is chosen, it can go sideways with a probability of 0.2 (i.e., 0.1 for each direction). If  \textit{Right}(\textit{Left}) is chosen,  the robot moves as intended with probability of 0.9 and in right(left) direction with probability of 0.1. 

The objective is to visit a state labeled with $b$ and a state labeled with $c$ infinitely often, and the safe states, labeled with either $d$ or $e$, should not be left after a certain point of time. The task is formally described by the LTL formula
\begin{align}
    \varphi_1 = \square \lozenge b \wedge \square \lozenge c \wedge (\lozenge \square d \vee \lozenge \square e), \label{eq:robust_controller}
\end{align}
which we translated into a DRA with two accepting pairs.

Fig.~\ref{fig:robust_controller} depicts the grid world we used and the strategy obtained for it after 128K episodes. The objective cannot be satisfied by visiting the states (labeled with $b$ and $c$) at the top-left or the top-right corner of the grid because the environment can force the robot to leave the safe states. The only possible way to achieve the task is going from the state labeled with $b$ to the state $c$ and vice versa without leaving~the~safe states. The strategy in Fig.~\ref{fig:robust_controller} ensures that the robot stays below the second row once it reaches the area, and does not visit the unsafe state in the middle regardless of the environment's actions. Also, under the strategies in Fig.~\ref{fig:robust_controller_b_to_c},~\ref{fig:robust_controller_c_to_b}, the robot eventually reaches the states labeled with $b$ and $c$,~respectively.

\begin{figure}
    \begin{subfigure}{0.24\textwidth}
        \centering
        \includegraphics[width=\textwidth]{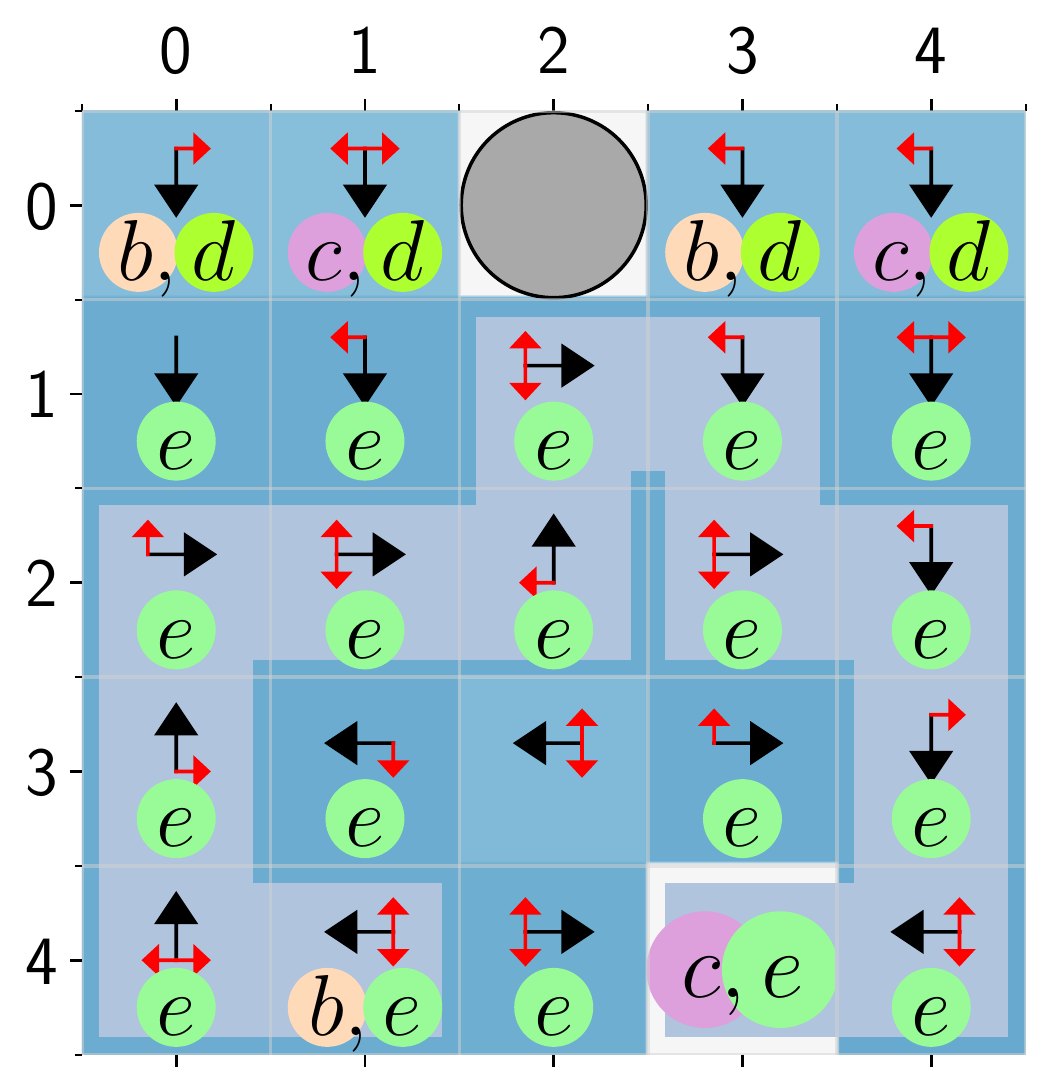}
        \caption{Strategy from $b$ to $c$}
        \label{fig:robust_controller_b_to_c}
    \end{subfigure}
    \begin{subfigure}{0.24\textwidth}
        \centering
        \includegraphics[width=\textwidth]{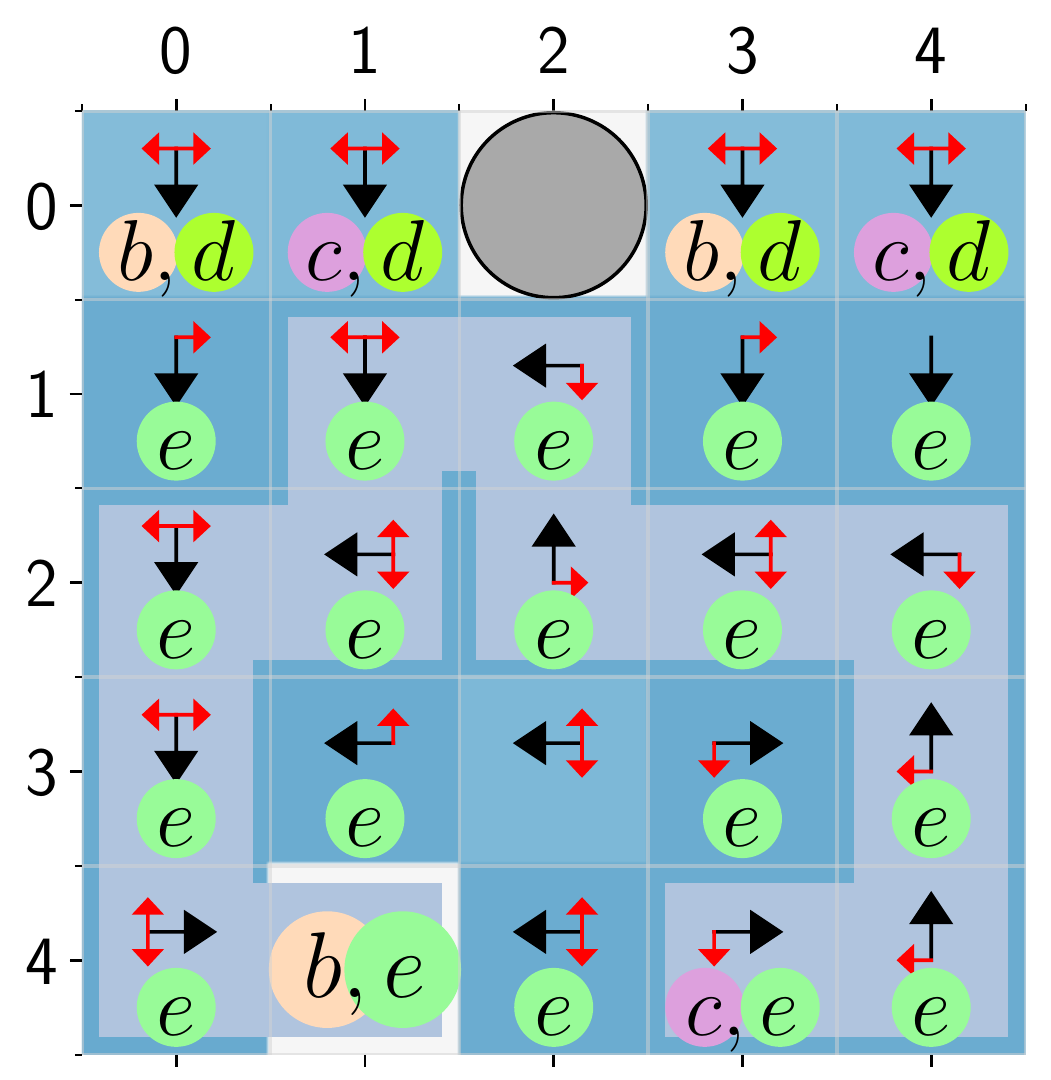}
        \caption{Strategy from $c$ to $b$}
        \label{fig:robust_controller_c_to_b}
    \end{subfigure}
    \caption{The objective strategy and the estimated maximal probabilities of satisfying  $\varphi_1$ from~\eqref{eq:robust_controller}. The most likely path is highlighted in a lighter blue.
    } 
    \label{fig:robust_controller}
\end{figure}

\subsection{Avoiding Adversary}
In this case study, the robot movement is not affected by the environment actions. Instead, we model the imprecise movement by a fixed probability distribution --  the robot moves as intended with probability of 0.8 and goes to the right or the left side of the intended direction with probabilities of 0.1. Another agent is controlled by an adversary (the environment), which can take the same four actions as the controller, 
with the same probability~distribution. 

The size of the state space here is $(5\times5)\times(5\times5)=625$ since there are two independent agents. The labeling function is based on the position of the first agent as
\begin{align}
    L(\langle s_1,s_2 \rangle) \coloneqq \begin{cases}
        L(s_1), & s_1 \neq s_2 \\
        L(s_1) \cup \{a\}, & s_1 = s_2
    \end{cases}
\end{align}
The label $a$ represents the state where both agents are in the same position (adversary 'catches' the robot). The robot objective is the same as in the first scenario~\eqref{eq:robust_controller}, except that it additionally needs to avoid the adversary at all costs -- i.e.,
\begin{align}
    \varphi_2 = \varphi_1 \wedge \square \neg a. \label{eq:adversary}
\end{align}

Fig.~\ref{fig:adversary} shows the control strategy obtained after 512K episodes. There are four safe zones in this grid world: one at the top-left, another at the top-right, and two at the bottom part of the grid. The robot or the adversary can get trapped in a sink state with probability $p\geq0.2$ while traveling between the top and the bottom parts of the grid. Thus, the optimal strategy for the controller is not to switch zones unless the adversary is in the same zone. For example, in Fig.~\ref{fig:adversary_0_0}, if the robot is in the bottom part, the controller should not try to move the robot to the top-right part, a farther safe zone, because there is a chance ($p\geq0.2$) that the adversary ends up with a sink state if she tries to move to the bottom part. If the robot is in the top-right part, the controller should switch to the second Rabin pair via $\varepsilon_2$ and make the robot stay in the same zone. However, in Fig.~\ref{fig:adversary_3_1}, the robot cannot stay in the bottom part because otherwise the adversary will eventually catch~her.

\begin{figure}
    \begin{subfigure}{0.24\textwidth}
        \centering
        \includegraphics[width=\textwidth]{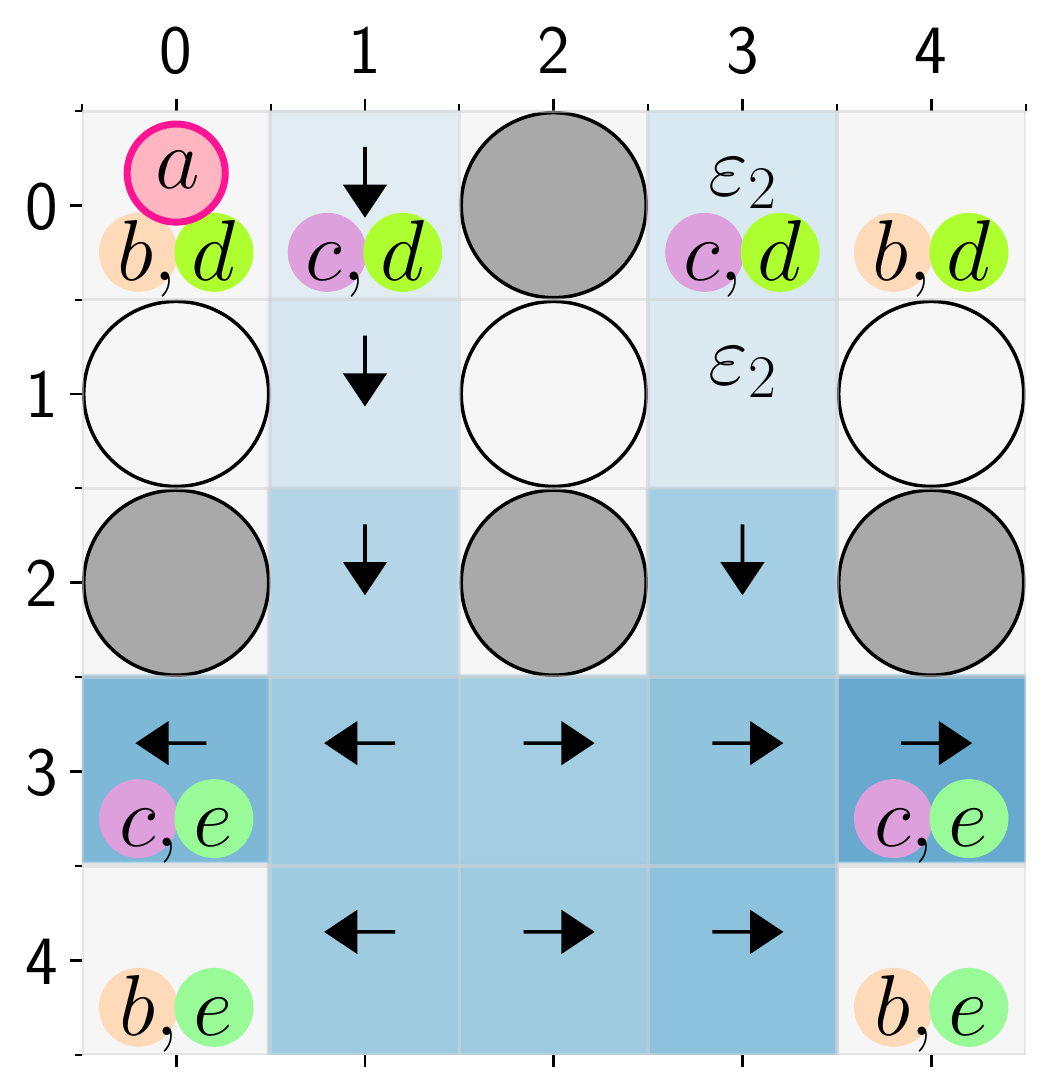}
        \caption{\footnotesize Adversary is at $(0,0)$ and $i{=}1$}
        \label{fig:adversary_0_0}
    \end{subfigure}
    \begin{subfigure}{0.24\textwidth}
        \centering
        \includegraphics[width=\textwidth]{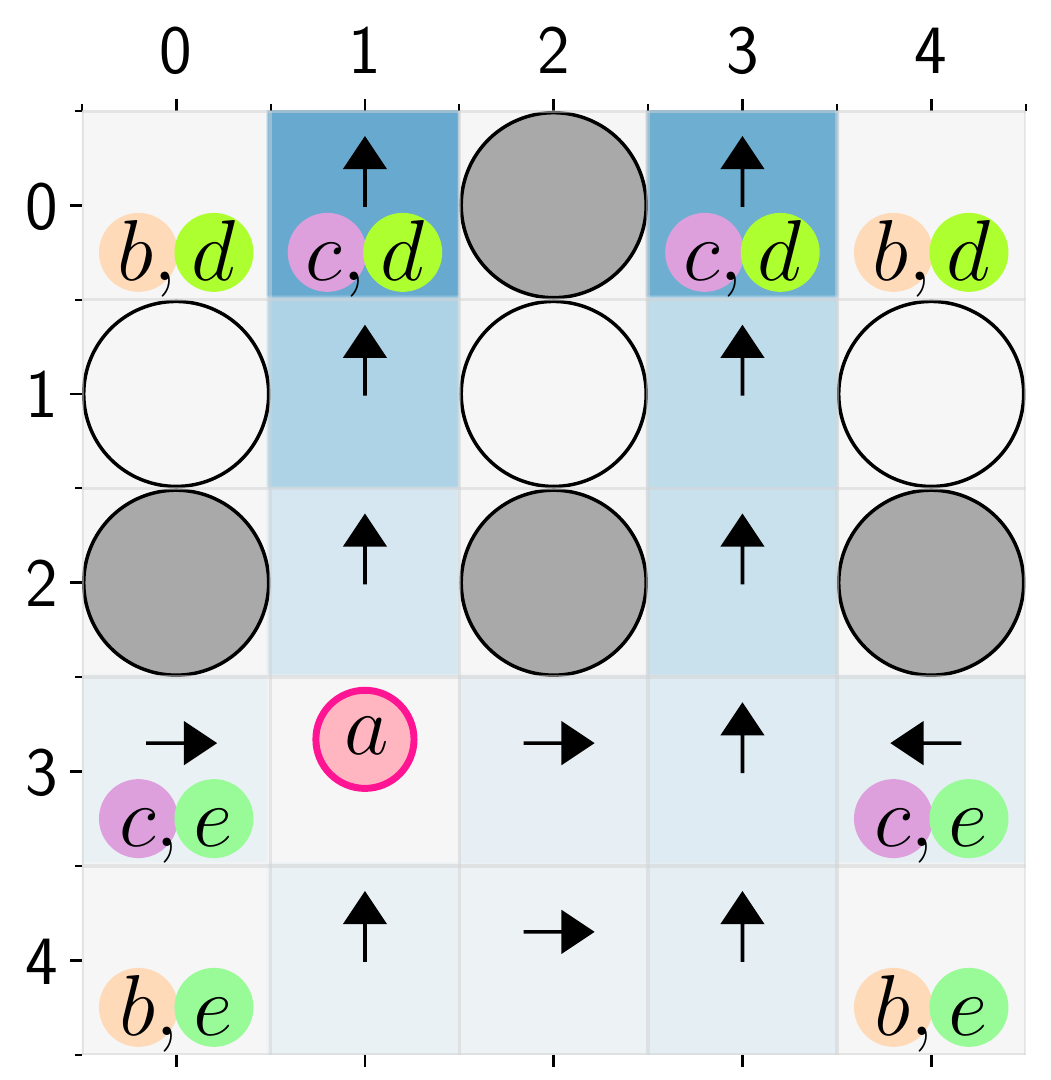}
        \caption{\footnotesize Adversary is at $(3,1)$ and $i{=}2$}
        \label{fig:adversary_3_1}
    \end{subfigure}
    \caption{The control strategies obtained for $\varphi_2$ from~\eqref{eq:adversary}. The values of the states are represented by the shades of blue (the darker, the higher value), which are the estimation of how likely the controller satisfies the objective.} 
    \label{fig:adversary}
\end{figure}

\section{Conclusions}

In this paper, we introduced an RL-based approach for synthesis of controllers from LTL specifications 
in stochastic games. We first provided a reduction from this synthesis problem to the problem of finding a control strategy in a stochastic Rabin game with a single accepting pair. We introduced a rewarding and discounting mechanism that transforms the objective of maximizing the (minimal/worst-case) probability of satisfying the Rabin condition into the objective of maximizing the discounted reward, and introduced an RL algorithm to find such a policy. We then provided a reduction allowing us to generalize our approach to any LTL specification, with the Rabin condition having $k>1$ accepting pairs, with a lower bound on the satisfaction probabilities. Finally, we  showed the applicability of our approach on two case path planning case-studies.


\bibliographystyle{unsrt}
\bibliography{references,yu}

\end{document}